\newenvironment{code}{\captionsetup{type=listing}}{}
\newcommand{\eps}{\varepsilon}
\DeclareMathOperator{\id}{id}
\DeclareMathOperator{\Real}{Re}
\renewcommand{\Re}{\Real}
\DeclareMathOperator*{\argmax}{argmax}
\DeclareMathOperator*{\argmin}{argmin}
\DeclareMathOperator{\diag}{diag}
\DeclareMathOperator{\prox}{prox}
\DeclareMathOperator{\spec}{spectrum}
\renewcommand{\geq}{\geqslant}
\renewcommand{\leq}{\leqslant}
\newcommand{\R}{\mathbf R}
\newcommand{\N}{\mathbf N}
\newcommand{\C}{\mathbf C}
\newcommand{\expectation}{\mathbf E}
\newtheorem{theorem}{Theorem}[section]
\newtheorem{example}{Example}[section]
\newtheorem{lemma}[theorem]{Lemma}
\theoremstyle{definition}
\newtheorem{definition}[theorem]{Definition}
\newtheorem{remark}{Remark}
\begin{document}
\begin{frontmatter}
	\title{Designing Stable Neural Networks using Convex Analysis and ODEs} 
	\author[1]{Ferdia Sherry\corref{cor1}}
	\ead{fs436@cam.ac.uk}
	\author[2]{Elena Celledoni}
	\ead{elena.celledoni@ntnu.no}
	\author[3]{Matthias J.\ Ehrhardt}
	\ead{me549@bath.ac.uk}
	\author[2]{Davide Murari}
	\ead{davide.murari@ntnu.no}
	\author[2]{Brynjulf Owren}
	\ead{brynjulf.owren@ntnu.no}
	\author[1]{Carola-Bibiane Sch\"onlieb}
	\ead{cbs31@cam.ac.uk}

	\affiliation[1]{organization={Department of Applied Mathematics and Theoretical Physics, University of Cambridge},
		addressline={Wilberforce Road},
		city={Cambridge},
		postcode={CB3 0WA},
		country={UK}}

	\affiliation[2]{organization={Department of Mathematical Sciences, Norwegian University of Science and Technology},
		addressline={Institutt for matematiske fag},
		city={Trondheim},
		postcode={7491},
		country={Norway}}

	\affiliation[3]{organization={Department of Mathematical Sciences, University of Bath},
		addressline={6 West},
		city={Bath},
		postcode={BA2 7JU},
		country={UK}}

	\cortext[cor1]{Corresponding author}

	\begin{abstract}
		Motivated by classical work on the numerical integration of ordinary differential equations we present a ResNet-styled neural network architecture that encodes non-expansive (1-Lipschitz) operators, as long as the spectral norms of the weights are appropriately constrained. This is to be contrasted with the ordinary ResNet architecture which, even if the spectral norms of the weights are constrained, has a Lipschitz constant that, in the worst case, grows exponentially with the depth of the network. Further analysis of the proposed architecture shows that the spectral norms of the weights can be further constrained to ensure that the network is an averaged operator, making it a natural candidate for a learned denoiser in Plug-and-Play algorithms. Using a novel adaptive way of enforcing the spectral norm constraints, we show that, even with these constraints, it is possible to train performant networks. The proposed architecture is applied to the problem of adversarially robust image classification, to image denoising, and finally to the inverse problem of deblurring.

	\end{abstract}

	\begin{keyword}
		Deep learning, numerical integration of ODEs, convex analysis, monotone operator theory, inverse problems
	\end{keyword}

\end{frontmatter}

\section{Introduction}
The desire to impose Lipschitz conditions on neural networks has come to the forefront in a number of tasks in recent years, especially because there have been serious concerns about the stability of neural networks ever since it was shown that high-performance image classifiers may suffer from adversarial examples~\cite{goodfellow_explaining_2015}. These issues need to be satisfactorily resolved before deep learning methods can be considered suitable for application in safety-critical systems. Another important application of Lipschitz neural networks can be found in generative modelling, in particular in models such as Wasserstein generative adversarial networks (GANs)~\cite{arjovsky_wasserstein_2017}. In these models, the aim is to minimise the Wasserstein distance between the output of a generator neural network and some target distribution:
\begin{equation}
	\label{prob:learning_generator}
	\min_{\Psi} W_1(\Psi\#\mu_{\mathrm{latent}}, \mu_{\mathrm{true}}),
\end{equation}
where $W_1$ is the Wasserstein metric, $\mu_{\mathrm{latent}}$ is a (simple) distribution of latent variables $Z\in\mathcal Z$, $\Psi\# \mu_{\mathrm{latent}}$ is its pushforward by the generator neural network $\Psi:\mathcal Z\to\mathcal X$ and $\mu_{\mathrm{true}}$ is the target distribution of $X\in \mathcal X$. Appealing to the Kantorovich-Rubinstein duality, we know that
\[W_1(\mu, \nu) = \sup_{f:\mathcal X\to \R,\mathrm{1-Lipschitz}} \expectation_{X\sim \mu}[f(X)] - \expectation_{Y\sim \nu}[f(Y)],\]
where $f$ is usually called the critic. With this result, \Cref{prob:learning_generator} becomes the following saddle point problem:
\[\min_{\Psi}\sup_{f:\mathcal X\to \R, \mathrm{1-Lipschitz}}\expectation_{Z\sim \mu_{\mathrm{latent}}} [f(\Psi(Z))] - \expectation_{X\sim \mu_{\mathrm{true}}} [f(X)]. \]
To solve this problem, we are required to flexibly parametrise 1-Lipschitz critic functions $f:\mathcal X\to \R$.

Another application in which neural network instabilities may cause considerable problems is the application of deep learning to inverse problems. A prototypical approach to doing this is the so-called Plug-and-Play (PnP) approach~\cite{venkatakrishnanPlug2013, chanAlgorithm2016,chanPlug2016, sreehariPlug2016}, in which some parts of an iterative optimisation algorithm (usually the denoising steps) for a variational reconstruction problem are replaced by a different component, which may be learned separately. In this setting, we generally assume that we have a forward operator $K$ that maps (uncorrupted) images to (uncorrupted) measurements, e.g.\ a convolution in the case of a deblurring problem or the Radon transform in the case of computed tomography. The measurements are furthermore corrupted by noise to give noisy measurements $y$ and the goal is to recover the original image as well as possible from $y$, which is made more complicated by the usual ill-posedness of this inversion. The variational approach to regularising inverse problems overcomes this ill-posedness by minimising an objective function that balances data mismatch with fit to prior knowledge. Mismatch between a recovered image and the noisy measurements $y$ is measured in the form of a data discrepancy functional $E_y$, which often takes a least-squares form: $E_y(x) = \|Kx - y\|^2$. Rather than use an explicit regularisation functional as in the usual variational approach, the PnP approach uses a denoiser $\Phi$ to encode information about the fit to prior knowledge. A simple example of this approach is shown in~\Cref{alg:pnp_fb}, using a PnP method based on the proximal gradient method. The initial estimate $x^0$ may be set in various ways: we may use a constant initialisation, or apply a (regularised) pseudoinverse of $K$ to $y$ to obtain $x^0$. Further parameters of the method are the number of iterations $\texttt{n\_iter}$ and the step size $\tau > 0$ for the gradient steps on the data discrepancy. In this setting, the denoiser $\Phi$ may be a denoiser trained on natural images.
\begin{algorithm}[ht!]
	\caption{Plug-and-Play proximal gradient method}
	\label{alg:pnp_fb}
	\begin{algorithmic}
		\State \textbf{inputs:} noisy measurements $y$, initial estimate $x^0$, denoiser $\Phi$, step size $\tau$, number of iterations \texttt{n\_iter}
		\State $x\leftarrow x^0$
		\For{$i\leftarrow 1,\ldots, \texttt{n\_iter}$}
		\State $x\leftarrow \Phi(x - \tau \nabla E_y(x))$
		\EndFor\\
		\Return $x$
	\end{algorithmic}

\end{algorithm}

With such algorithms, we may run into divergent behaviour if we do not restrict $\Phi$ appropriately~\cite{sommerhoffEnergyDissipationPlugandPlay2019}, but there is recent work showing that the iterative method will converge as long as certain Lipschitz conditions are imposed on the denoiser $\Phi$~\cite{ryu_plug-and-play_2019,hertrichConvolutionalProximalNeural2021}.

Lipschitz continuity is a standard way to quantify the stability of a function. Let us recall its definition and some associated properties: a function $f:\mathcal X\to \mathcal Y$ between metric spaces $(\mathcal X, d_{\mathcal X})$ and $(\mathcal Y, d_{\mathcal Y})$ is said to be $L$-Lipschitz for some $L\geq 0$ if $d_\mathcal Y(f(x_1), f(x_2))\leq L d_\mathcal X(x_1, x_2)$ for all $x_1, x_2\in \mathcal X$. This notion of stability plays well with the compositional nature of neural networks: if $f_1 :\mathcal X\to \mathcal Y$ and $f_2:\mathcal Y\to \mathcal Z$ are $L_1$-Lipschitz and $L_2$-Lipschitz respectively, their composition $f_2 \circ f_1$ is $(L_1\cdot L_2)$-Lipschitz. If $\mathcal X$ and $\mathcal Y$ are in fact normed spaces, we can furthermore see (by definition) that any bounded linear operator $A:\mathcal X\to \mathcal Y$ is $\|A\|$-Lipschitz, where the norm is the operator norm. In particular, an ordinary feedforward neural network $\Psi(x) = A^K\sigma(b^{K-1} + A^{K-1}\sigma(\ldots +A^2\sigma(b^1 + A^1x)))$ with a $1$-Lipschitz activation function $\sigma$ and learnable linear operators $A^1, \ldots, A^K$ and biases $b^1, \ldots, b^K$ is $L$-Lipschitz, where $L = \prod_{i=1}^K \|A^i\|$. This naturally gives rise to the idea of spectral normalisation: if an ordinary feedforward neural network with a given Lipschitz constant $L$ is required for a specific application, this can be achieved by appropriately normalising the linear operators, as applied to GANs in~\cite{miyato_spectral_2018}. It is worth remarking here that we are referring to any $L$ that satisfies the defining inequality for Lipschitz continuity of $f$ as a Lipschitz constant of $f$; often the term Lipschitz constant is used instead to refer only to the infimum of such $L$, which defines a seminorm on vector spaces of Lipschitz functions. We will refer to this infimum as the optimal Lipschitz constant of $f$, and note that the statements about the composition of Lipschitz functions, when framed in terms of optimal Lipschitz constants, only give upper bounds in general.

In this work we are focused on the case where $\mathcal X$ and $\mathcal Y$ are equal to each other, as is the case in many image-to-image tasks. Residual networks (ResNets)~\cite{he_delving_2015,he_deep_2016} have proven to be an extremely successful neural network meta-architecture in this setting, especially when identity skip connections are used~\cite{heIdentityMappingsDeep2016}, as we will assume in the rest of this paper: a ResNet parametrises a neural network by $\Psi = (\id + \Psi^K) \circ\ldots \circ (\id + \Psi^1)$, where each $\Psi^i$ is a small neural network. Without further constraints, the Lipschitz continuity of such a network may be badly behaved as the depth increases: even if we control each $\Psi^i$ to be $\eps$-Lipschitz for some small $\eps >0$, in the worst case we can not guarantee anything better than that $\id +\Psi^i$ is $(1+\eps)$-Lipschitz, and that the composition $\Psi$ is $L$-Lipschitz with $L=(1 + \eps)^K $, which grows exponentially as $K\to\infty$. Nevertheless, in this work we will show that is possible to design ResNets that are provably non-expansive ($1$-Lipschitz), while retaining high performance on tasks of interest. The design of these networks is based on the fact that gradient flows in convex potentials are non-expansive, and the fact that one-hidden-layer neural networks with appropriately tied weights are gradients of learnable convex potentials. Hence, parametrising the $\Psi^i$ in the ResNet above in this way, we might expect to obtain non-expansive neural networks, being the composition of discretisations of gradient flows in convex potentials. It is crucial, however, to be careful about how we implement this: in general, discretisations of non-expansive flows are not guaranteed to be non-expansive. Using the concept of circle contractivity, from the study of numerical methods for ODEs, we show that it is still possible to ensure the non-expansiveness of the proposed networks if we control the operator norm of their weights.
\subsection{Related topics}
\subsubsection*{Lipschitz neural networks}
As mentioned above, within the deep learning community there have been a number of drivers for research into neural networks with controlled Lipschitz constants, such as the desire to increase robustness to adversarial examples~\cite{tsuzukuLipschitzMarginTrainingScalable2018}, the necessity to model the critic in a Wasserstein GAN as a $1$-Lipschitz function~\cite{arjovsky_wasserstein_2017}, and their use in parametrising learnable diffeomorphisms~\cite{celledoni2022deep}. Spectral normalisation~\cite{miyato_spectral_2018} has become a standard approach to constraining the Lipschitz constant of an ordinary feedforward neural network. This approach ensures that the optimal (smallest) Lipschitz constant of a neural network is upper bounded. It is known to be computationally hard to estimate the true optimal Lipschitz constant~\cite{virmaux_lipschitz_2018} of a neural network, which has prompted further research into refining Lipschitz neural network architectures.
\subsubsection*{Methods based on continuous dynamical systems}
Applied mathematicians and physicists have long studied continuous dynamical systems in the form of ODEs and PDEs, giving rise to an extensive body of research on the structural properties of such systems. More recently, insights from these topics have been used to design neural network architectures which share similar structural properties~\cite{ruthotto_deep_2020,chang_reversible_2018,celledoni2022dynamical}. The adjoint method for computing gradients has gained widespread use in the deep learning community, after it was shown in the Neural ODEs paper~\cite{chen_neural_2018} that it is possible to parametrise the vector field defining an ODE by a neural network and differentiate through the flow to learn the vector field. This work has spawned a plethora of works that use learnable continuous dynamical systems.
\subsubsection*{Convex analysis and monotone operator theory}
There is a recent line of work investigating the connections between existing deep learning practice and the topics of convex analysis and monotone operator theory. In particular, many of the standard activation functions that are used in neural networks are averaged (in the sense that we define in~\Cref{sec:euler_averaged}), and further analysis enables one to use this insight to design neural networks that are averaged~\cite{combettes_deep_2020,pesquet_learning_2020,hasannasab_parseval_2020,hertrichConvolutionalProximalNeural2021}.
\subsection{Our contributions}
We describe and analyse a family of ResNet-styled neural network architectures that are guaranteed to be non-expansive. The effect of these neural networks on input vectors can be thought of as sequentially composing parts of (discretisations of) gradient flows along learnable convex potentials. We show that it is only necessary to control the operator norms of the learnable linear operators contained in these networks to ensure their non-expansiveness. This task is easily achieved in practice using power iteration.

The most basic such network takes the simple form described in~\Cref{alg:euler_lipschitz}. For this network, we use convex analysis techniques to show that more fine-grained control of the learnable linear operators ensures that each layer of the network is averaged, and as a result that the overall network is averaged.

We demonstrate the use of the proposed architectures on various tasks. The first problem that we consider is image classification, with a special focus on adversarial robustness on the CIFAR10 dataset. Here, non-expansiveness of the proposed networks is shown to be an effective inductive bias to increase the robustness of ResNet-styled networks. Secondly, we study the natural application of the proposed architectures to an image denoising task on the BSDS500 dataset, focusing on the influence of various tunable aspects in the architectures for this problem, and comparing our approach to standard approaches to the denoising task. Finally, we apply one of the learned denoisers, which is provably an averaged operator, to regularise the solution of an ill-posed inverse problem using PnP methods as described above, with a provable convergence guarantee.

\label{sec:introduction}

\section{Methods}
\label{sec:nonexpansive_odes}
Suppose that $f:\R\times \R^n \to \R^n$ is a time-dependent vector field and consider the ODE given by the flow along this vector field:
\begin{equation}
	\label{eq:ode}
	\dot z(t) = f(t, z(t)).
\end{equation}
Assuming existence and uniqueness of the solutions to the ODE, we can define the flow map $\Psi : [0,\infty)\ \times \R^n \to \R^n$ by $\Psi(t, x) =z(t)$, where $z$ solves~\Cref{eq:ode} with the initial condition $z(0) = x$. Since the vector fields that we will consider are (globally) Lipschitz continuous, global existence and uniqueness is not an issue by the Picard-Lindel\"of theorem~\cite{teschl2012}. It is natural to ask when this flow map is non-expansive, in the sense that $\|\Psi(t, x) - \Psi(t, y)\|\leq \|x - y\|$ for all $t,x,y$ (with $\|x\| := \sqrt{\langle x, x\rangle}$). Letting $t\to 0$, we see that it is necessary that
\begin{equation}\langle f(t, x) - f(t, y), x-  y\rangle \leq 0,
	\label{ineq:vec_field_monotone}
\end{equation}
and conversely, if this condition holds the flow map is non-expansive since
\begin{equation}
	\label{ineq:nonexpansive_flow}
	\frac{\mathrm d}{\mathrm d t} \|\Psi(t ,x) - \Psi(t, y)\|^2 = 2 \langle f(t, x(t)) - f(t, y(t)), x(t) - y(t)\rangle \leq 0.
\end{equation}
In practice, most ODEs of interest are not explicitly solvable and it is necessary to resort to numerical methods to approximate the flow map. A very well-studied class of such numerical integrators is the class of Runge--Kutta methods, which can be defined as follows:
\begin{definition}[Runge--Kutta method]
	\label{def:runge_kutta}
	If $m\in\N$ is a positive integer, an $m$-stage Runge--Kutta (RK) method is characterised by a matrix $\mathcal A\in \R^{m\times m}$ and two vectors $b, c\in \R^m$. For a step size $h>0$, the RK method approximates the step from $y=\Psi(t, x)$ to $\Psi(t + h, x)$ as follows:
	\[\Phi_h(t, y, f) = y + h\sum\limits_{i=1}^m b_i f(t + c_i h, Y_i),\]
	where $Y=(Y_1,\ldots, Y_m)$ (the set of so-called stages of the method) solves the non-linear system of equations
	\[Y_i =y + h \sum\limits_{j=1}^m \mathcal A_{ij} f(t + c_j h, Y_j)\quad\text{for}\quad i=1,\ldots, m.\]
	If $\mathcal A$ is strictly lower triangular, these equations are solvable in a single pass and the method is called explicit. Otherwise, the method is called an implicit method. To ensure that the method is at least of order one, we require that $\sum_{j=1}^mb_j=1$. Furthermore, an RK method will usually satisfy $c_i = \sum_{j=1}^m \mathcal A_{i,j}$. If $c_1,\ldots,c_m$ are distinct, the method is called non-confluent, whereas it is confluent otherwise.
\end{definition}
The most basic, and most well-known, example of an RK method is the forward Euler method, which is an explicit method. Given a time, $t$, and an approximation of the trajectory at that time, $y$, the Euler method simply approximates the trajectory at the next time step, $t + h$, as $y + h f(t, y)$. By a Taylor expansion, this approximation is easily seen to incur a (local) error of order $h^2$ as $h\to 0$. The forward Euler method is a 1-stage RK method, while general RK methods with $m\geq 1$ stages make use of the stages to approximate multiple intermediate values, enabling higher orders of approximation to be achieved. When an RK method incurs local errors of order $h^{p+1}$ as $h\to 0$, we say that the method has order $p$. For more general methods, Taylor expansions can be made of the true flow map and of the approximate flow map given by the method. These Taylor expansions can be compared term by term to derive the order of approximation. For more details on RK methods and conditions on their orders of approximation, the reader is recommended to consult~\cite{hairerSolvingOrdinaryDifferential1993}.

The form of the RK method (especially the forward Euler method) given in \cref{def:runge_kutta} is reminiscent of the form of the ResNet. Indeed, recall that a ResNet $\Psi$ can be written as $\Psi = (\id + \Psi^K) \circ \ldots \circ (\id + \Psi^1)$, with each $\Psi^i$ a smaller neural network. This is immediately recognised as a composition of $K$ steps of the forward Euler method, with (arbitrary) step sizes $h^i > 0$ and vector fields $f^i = \Psi_i / h^i$ (varying across the steps). Once this connection has been established, it is natural to split the problem of designing a neural network architecture into two stages: the first stage being the design of parametrised vector fields, the continuous flows of which have desirable structural properties, followed by the application of a suitable RK method to discretise the continuous flows in such a way that the desirable structural properties are preserved.

Since we aim to design neural networks that encode non-expansive operators, it is of particular interest to know whether a given numerical integrator preserves the non-expansiveness of a continuous flow for which~\Cref{ineq:nonexpansive_flow} holds. This property of a numerical integrator is called BN-stability and has been studied in detail for RK methods in~\cite{burrage_stability_1979}; for these methods, BN-stability is equivalent to algebraic stability, which is defined by a simple algebraic condition on the coefficients on the method. For methods that are non-confluent, these conditions are also equivalent to the condition of AN-stability (which is a priori a simpler condition). Although the names of these concepts of stability may seem cryptic at a first glance, they can be understood as follows: A-stability and AN-stability are concerned with the preservation of the non-expansiveness of flows of scalar linear vector fields under discretisation, for the autonomous (time-independent) and non-autonomous (time-dependent) cases respectively. On the other hand, B-stability and BN-stability ask that non-expansiveness of general non-expansive flows (i.e.\ with vector fields satisfying the monotonicity condition in \Cref{ineq:vec_field_monotone}) is preserved under discretisation, again for the autonomous and non-autonomous cases respectively. A comprehensive overview of stability properties for RK methods is given in~\cite{hairer_solving_1996, dekkerStabilityRungeKuttaMethods1984}. It is well known (see for instance~\cite{nevanlinna_nonexistence_1974}) however that no explicit RK method can satisfy such an unconditional stability condition. Nevertheless, it was shown in~\cite{dahlquist_generalized_1979} that a conditional stability result can be established for certain explicit RK methods as long as~\Cref{ineq:nonexpansive_flow} is replaced by an alternative that has the effect of controlling the stiffness of the ODE\@. To state this result, we require the definition of the circle contractivity property of an RK method.
\begin{definition}[Circle contractivity]
	Suppose that $\mathcal A\in \R^{m\times m}$ and $b,c\in\R^m$ are the matrix and vectors characterising an RK method as in~\Cref{def:runge_kutta}. We say that this RK method satisfies the $r$-circle contractivity condition for a given $r\in\R\cup\{\infty\}$ if $|K(\zeta)|\leq 1$ for all $\zeta \in D(r)^m$. Here, the function $K:\C^m \to \C$ is defined~\footnote{To understand the definition of $K$, it can be thought of as the action of the RK method on a scalar non-autonomous linear ODE.} as:
	\[K(\zeta) =  1 + b^\top\diag(\zeta)(\id - \mathcal A \diag(\zeta))^{-1}\mathbf 1,\]
	and $D(r)$ is a generalised disk:
	\[D(r) =
		\begin{cases}
			\{z\in \C | |z + r| \leq r\}\quad  & \text{when}\quad r \geq 0,   \\
			\{z\in \C | \Re(z) \leq 0\}\quad   & \text{when}\quad r = \infty, \\
			\{z\in \C | |z + r| \geq -r\}\quad & \text{when}\quad r < 0.
		\end{cases}
	\]
	In the above definition, $\Re(z)$ takes the real part of a complex number $z$.
\end{definition}
\begin{example}
	Recall the forward Euler method, which was introduced above. In the notation of~\Cref{def:runge_kutta}, we have $m=1$, $\mathcal A=0$ and $b=1$, so $K(z) = 1 + z$. We conclude that the forward Euler method is $1$-circle contractive.
\end{example}
\begin{remark}
	\label{remark:computing_r}
	It is straightforward to compute the optimal (in the sense that it gives the largest generalised disk) $r$ for which a given RK method is $r$-circle contractive if we know $\mathcal A$ and $b$: if we define the symmetric matrix $Q=\diag(b)\mathcal A + \mathcal A^\top\diag(b) -b b^\top$, theorem 3.1 from~\cite{dahlquist_generalized_1979} tells us that $r=-1/\rho$, where $\rho$ is the largest number such that $w^\top Qw \geq \rho w^\top\diag(b)w$ for all $w\in \R^m$. Hence, if we can solve the generalised eigenvalue problem $Qv = \lambda\diag(b) v$, we know that the minimal eigenvalue gives the desired $\rho$.
\end{remark}
With this definition, it is now possible to state the conditional stability result that extends to certain explicit methods:
\begin{theorem}[Theorem 4.1 from~\cite{dahlquist_generalized_1979}]
	\label{thm:circle_contr}
	Suppose that $\Phi_h$ is an RK method satisfying the $r$-circle contractivity condition, and that $f$ satisfies the monotonicity condition
	\begin{equation}
		\label{eq:monotonicity_f}
		\langle f(t, y) - f(t, z), y - z \rangle \leq -\nu \|f(t, y) - f(t, z)\|^2.
	\end{equation}
	Then, if $r\neq\infty$ and $h/r \leq 2\nu$, or if $r = \infty$ and $\nu \geq 0$,
	\[\|\Phi_h (t, y, f) - \Phi_h(t, z, f)\|\leq \|y - z\|.\]
\end{theorem}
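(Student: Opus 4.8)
The plan is to estimate the squared distance between the two outputs of the RK step directly, collapse everything into a single quadratic form in the stage increments of the vector field, and then feed in the monotonicity hypothesis together with the algebraic consequence of circle contractivity recorded in \Cref{remark:computing_r}.

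First I would fix two initial values $y,z$ and set $u=y-z$. Denoting by $Y_i, Z_i$ the stages of the two trajectories, I introduce $\Delta Y_i = Y_i - Z_i$ and $g_i = f(t+c_i h, Y_i) - f(t+c_i h, Z_i)$. Subtracting the two stage equations and the two output formulae from \Cref{def:runge_kutta} gives the linear relations $\Delta Y_i = u + h\sum_j \mathcal A_{ij} g_j$ and $\Phi_h(t,y,f)-\Phi_h(t,z,f) = u + h\sum_i b_i g_i$. Expanding $\|\Phi_h(t,y,f)-\Phi_h(t,z,f)\|^2 - \|u\|^2$, using the stage relation to eliminate $u$ from the cross term, and symmetrising the resulting double sum collapses everything into the identity
\[\|\Phi_h(t,y,f)-\Phi_h(t,z,f)\|^2 - \|u\|^2 = 2h\sum_{i=1}^m b_i\langle \Delta Y_i, g_i\rangle - h^2\sum_{i,j=1}^m Q_{ij}\langle g_i, g_j\rangle,\]
where $Q = \diag(b)\mathcal A + \mathcal A^\top\diag(b) - bb^\top$ is exactly the matrix appearing in \Cref{remark:computing_r}. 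This identity is the heart of the argument, and deriving it is essentially bookkeeping.

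The final step is to bound the two sums. For the first, the monotonicity hypothesis \Cref{eq:monotonicity_f} applied to the pair $(Y_i, Z_i)$ gives $\langle \Delta Y_i, g_i\rangle \leq -\nu\|g_i\|^2$; provided the weights satisfy $b_i\geq 0$ (as we assume throughout), this yields $2h\sum_i b_i\langle\Delta Y_i, g_i\rangle \leq -2h\nu\sum_i b_i\|g_i\|^2$. For the second, I use the algebraic characterisation of circle contractivity recorded in \Cref{remark:computing_r}, namely $w^\top Q w \geq -\tfrac1r\, w^\top\diag(b) w$ for all $w\in\R^m$ (with the convention $-1/\infty := 0$). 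The one genuinely technical point is that this scalar quadratic-form inequality must be lifted to the vector-valued inner products appearing above: applying it coordinatewise to $w^{(k)} = ((g_1)_k,\ldots,(g_m)_k)$ and summing over the $n$ coordinates of $\R^n$ gives $\sum_{i,j}Q_{ij}\langle g_i, g_j\rangle \geq -\tfrac1r\sum_i b_i\|g_i\|^2$.

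Substituting both bounds into the identity leaves $\|\Phi_h(t,y,f)-\Phi_h(t,z,f)\|^2 - \|u\|^2 \leq (\tfrac{h^2}{r} - 2h\nu)\sum_i b_i\|g_i\|^2$ when $r\neq\infty$, and the cleaner bound $\leq -2h\nu\sum_i b_i\|g_i\|^2$ when $r=\infty$ (where the $Q$-term is simply nonnegative). Since $h>0$ and $\sum_i b_i\|g_i\|^2\geq 0$, the hypothesis $h/r\leq 2\nu$ (respectively $\nu\geq 0$) makes the right-hand side nonpositive, which is precisely the non-expansiveness claim. I expect the lifting of the scalar quadratic form to the vector-valued setting, together with the sign tracking across the regimes $r>0$, $r<0$ and $r=\infty$, to be the only delicate parts; everything else is routine algebra.
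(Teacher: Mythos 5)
The paper does not actually prove this theorem: it is imported verbatim as Theorem 4.1 of the cited Dahlquist--Jeltsch paper, so there is no in-paper proof to compare against. Your argument is, in essence, the standard algebraic-stability computation on which that citation rests (the Burrage--Butcher/Dahlquist--Jeltsch energy identity), and it is correct: the identity $\|\Phi_h(t,y,f)-\Phi_h(t,z,f)\|^2-\|u\|^2 = 2h\sum_i b_i\langle \Delta Y_i,g_i\rangle - h^2\sum_{i,j}Q_{ij}\langle g_i,g_j\rangle$ checks out (eliminate $u$ from the cross term via $u=\Delta Y_i - h\sum_j \mathcal A_{ij}g_j$ and symmetrise), your coordinatewise lifting of the scalar quadratic form to $\sum_{i,j}Q_{ij}\langle g_i,g_j\rangle \geq -\tfrac1r\sum_i b_i\|g_i\|^2$ is exactly right, and the sign bookkeeping across the regimes $r>0$, $r<0$, $r=\infty$ is sound.

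Three points that you gloss over deserve explicit statement. First, you invoke $b_i\geq 0$ twice (to weight the monotonicity inequality and to conclude $\sum_i b_i\|g_i\|^2\geq 0$), but the paper's \Cref{def:runge_kutta} does not impose it; it is a standing hypothesis of the Dahlquist--Jeltsch framework (indeed $b_i>0$ is needed even to pose the generalised eigenvalue problem with $\diag(b)$ in \Cref{remark:computing_r}), so it should be declared as inherited from the cited setting rather than ``assumed throughout''. Second, \Cref{remark:computing_r} characterises the \emph{optimal} radius, whereas you need the quadratic-form inequality at the \emph{given} $r$; the missing one-liner is that $D(r)$ is increasing in $r$, so $r\leq r_{\mathrm{opt}}=-1/\rho$ together with $\diag(b)\succeq 0$ yields $w^\top Qw \geq -\tfrac1r\,w^\top \diag(b)\,w$ — this is the only place the contractivity hypothesis enters, so it is worth writing out. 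Third, for implicit methods your subtraction of the two stage systems presupposes that the stages $Y_i,Z_i$ exist and are the solutions being compared; the paper only ever applies the theorem to explicit methods, where this is vacuous, but a general proof should flag it. None of these affects the validity of the core computation.
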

The idea of using this result to design non-expansive neural networks was recently discussed in~\cite{celledoniStructurepreservingDeepLearning2021}, though in this work no indication was given of how the vector fields should be parametrised. The monotonicity condition given by~\Cref{eq:monotonicity_f} is reminiscent of the property of co-coercivity, known mainly from the theory of convex optimisation for its use in the Baillon-Haddad theorem:
\begin{theorem}[Corollary 18.16 from~\cite{bauschke_convex_2011}]
	\label{thm:bh}
	Suppose that $\phi:\mathcal X\to \R$ is a Fr\'echet-differentiable convex function on a Hilbert space $\mathcal X$. Then $\phi$ is $L$-smooth for some $L\geq 0$ (equivalently, $\nabla \phi$ is $L$-Lipschitz), meaning that
	\[\phi(y)\leq \phi(x) + \langle \nabla \phi(x), y- x\rangle  + \frac{L}{2} \|y - x\|^2,\]
	if and only if $\nabla \phi$ is $1/L$-co-coercive, meaning that
	\[\langle \nabla \phi(y) - \nabla \phi(x), y - x\rangle \geq \frac{1}{L} \|\nabla \phi(y) - \nabla \phi(x)\|^2.\]
\end{theorem}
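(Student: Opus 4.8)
The plan is to establish the stated equivalence as a cycle of three implications among the conditions: (i) $\nabla\phi$ is $L$-Lipschitz, (ii) the descent inequality $\phi(y)\leq\phi(x)+\langle\nabla\phi(x),y-x\rangle+\frac{L}{2}\|y-x\|^2$ holds, and (iii) the co-coercivity inequality $\langle\nabla\phi(y)-\nabla\phi(x),y-x\rangle\geq\frac1L\|\nabla\phi(y)-\nabla\phi(x)\|^2$ holds. Proving (iii)$\Rightarrow$(i)$\Rightarrow$(ii)$\Rightarrow$(iii) closes the loop and yields all equivalences; crucially, the convexity of $\phi$ is used only in the last, genuinely difficult implication.

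The first two implications are routine. For (iii)$\Rightarrow$(i), I would apply Cauchy--Schwarz to the left-hand side of co-coercivity, $\frac1L\|\nabla\phi(y)-\nabla\phi(x)\|^2 \leq \langle\nabla\phi(y)-\nabla\phi(x),y-x\rangle \leq \|\nabla\phi(y)-\nabla\phi(x)\|\,\|y-x\|$, and divide by $\|\nabla\phi(y)-\nabla\phi(x)\|$ (the claim being trivial when this vanishes) to obtain $L$-Lipschitzness. For (i)$\Rightarrow$(ii), I would write $\phi(y)-\phi(x)=\int_0^1\langle\nabla\phi(x+t(y-x)),y-x\rangle\,\mathrm dt$ via the fundamental theorem of calculus, subtract $\langle\nabla\phi(x),y-x\rangle$, and bound the resulting integrand by Cauchy--Schwarz together with the Lipschitz estimate $\|\nabla\phi(x+t(y-x))-\nabla\phi(x)\|\leq Lt\|y-x\|$; integrating $\int_0^1 Lt\|y-x\|^2\,\mathrm dt=\frac{L}{2}\|y-x\|^2$ delivers the descent inequality. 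Neither of these steps uses convexity.

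The crux is (ii)$\Rightarrow$(iii), and I expect it to be the main obstacle. Fixing $x$, I would introduce the shifted function $\psi_x(z)=\phi(z)-\langle\nabla\phi(x),z\rangle$, which inherits convexity from $\phi$, satisfies the same descent inequality (shifting by a linear term leaves the quadratic remainder unchanged), and has $\nabla\psi_x(z)=\nabla\phi(z)-\nabla\phi(x)$. The key observation is that $\nabla\psi_x(x)=0$, so by convexity $x$ is a global minimiser of $\psi_x$. Applying the descent inequality to $\psi_x$ along the single gradient-descent step from $y$ to $y-\frac1L\nabla\psi_x(y)$ gives $\psi_x\bigl(y-\frac1L\nabla\psi_x(y)\bigr)\leq\psi_x(y)-\frac{1}{2L}\|\nabla\psi_x(y)\|^2$, and bounding the left-hand side below by the global minimum $\psi_x(x)$ yields
\[\frac{1}{2L}\|\nabla\phi(y)-\nabla\phi(x)\|^2\leq\phi(y)-\phi(x)-\langle\nabla\phi(x),y-x\rangle.\]
This inequality, extracted by exploiting the global minimum of the auxiliary function through one gradient step, is the hard part: a naive use of the descent inequality recovers only the Lipschitz bound, not this sharper control by the squared gradient norm.

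To finish, I would write the analogous inequality with the roles of $x$ and $y$ exchanged and add the two: the potential values $\phi(x),\phi(y)$ cancel and the linear terms combine, leaving exactly $\frac1L\|\nabla\phi(y)-\nabla\phi(x)\|^2\leq\langle\nabla\phi(y)-\nabla\phi(x),y-x\rangle$, which is (iii). Since the Hilbert-space setting enters only through the inner product and Cauchy--Schwarz, the argument carries over verbatim to infinite dimensions.
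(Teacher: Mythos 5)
Your proof is correct, but there is nothing in the paper to compare it against: the paper imports this statement verbatim as Corollary 18.16 of Bauschke and Combettes~\cite{bauschke_convex_2011} and gives no proof of its own, so the relevant comparison is with the cited source. Your cycle (iii) $\Rightarrow$ (i) $\Rightarrow$ (ii) $\Rightarrow$ (iii) is the standard self-contained (Nesterov-style) argument, and every step checks out: Cauchy--Schwarz for (iii) $\Rightarrow$ (i); the fundamental theorem of calculus along the segment for (i) $\Rightarrow$ (ii); and for the crux (ii) $\Rightarrow$ (iii), the auxiliary function $\psi_x(z)=\phi(z)-\langle\nabla\phi(x),z\rangle$, whose global minimum at $x$ (the one place convexity enters, exactly as you say) combined with a single gradient step of length $1/L$ gives $\frac{1}{2L}\|\nabla\phi(y)-\nabla\phi(x)\|^2\leq\phi(y)-\phi(x)-\langle\nabla\phi(x),y-x\rangle$, after which symmetrising in $x$ and $y$ yields co-coercivity. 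This route is genuinely different from the one in the cited reference, where the result is deduced from a chain of equivalences (their Theorem 18.15) involving convexity of $\frac{L}{2}\|\cdot\|^2-\phi$ and firm nonexpansiveness of $\frac{1}{L}\nabla\phi$; your argument is more elementary, avoids the monotone-operator machinery, and, as you note, transfers verbatim to the Hilbert-space setting since it only uses the inner product. Two cosmetic caveats: the statement implicitly requires $L>0$ for $1/L$-co-coercivity to be meaningful (when $L=0$ both conditions degenerate to $\nabla\phi$ being constant), and in (i) $\Rightarrow$ (ii) you should remark that Lipschitz continuity of $\nabla\phi$ makes $t\mapsto\langle\nabla\phi(x+t(y-x)),y-x\rangle$ continuous, which is what licenses the use of the fundamental theorem of calculus along the segment.
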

Indeed, if $f(t, x) = -\nabla \phi(x)$ for a $1/\nu$-smooth convex potential $\phi:\R^n \to \R$ (so that we have a gradient flow of a smooth convex potential), then~\Cref{eq:monotonicity_f} is satisfied. This connection has recently been used to demonstrate in~\cite{sanz_serna_contractivity_2020} that there is an explicit RK method for which the circle contractivity disk degenerates to a point, by constructing a smooth convex potential for which the non-expansiveness of the flow map is not preserved.

For the purpose of using this observation and~\Cref{thm:circle_contr} to design non-expansive neural networks, note the following result:
\begin{lemma}
	\label{lemma:arch_stiffness}
	Suppose that $\sigma :\R\to \R$ is a non-decreasing $L$-Lipschitz activation function, $A\in\R^{n\times k}$ is a matrix and $b\in \R^n$ is a bias vector. The vector field $f_{A, b}(t, x) = - A^\top\sigma(Ax + b)$ (where $\sigma$ is applied separately to each component) satisfies~\Cref{eq:monotonicity_f} with $\nu =1/(\|A\|^2 L)$.
\end{lemma}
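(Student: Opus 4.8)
The plan is to recognise $f_{A,b}$ as the negative gradient of a smooth convex potential and then invoke the Baillon--Haddad theorem (\Cref{thm:bh}) to convert an $L\|A\|^2$-smoothness estimate into the required co-coercivity inequality. This routes the monotonicity condition \Cref{eq:monotonicity_f} through exactly the convex-analytic structure that the previous results were set up to exploit.

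First I would construct the potential. Since $\sigma$ is $L$-Lipschitz it is in particular continuous, so it admits a $C^1$ antiderivative $\Sigma:\R\to\R$ with $\Sigma' = \sigma$ (for concreteness $\Sigma(u) = \int_0^u \sigma(s)\,\mathrm ds$, well-defined by the fundamental theorem of calculus). Define $\phi:\R^k\to\R$ by $\phi(x) = \mathbf 1^\top \Sigma(Ax + b) = \sum_{i=1}^n \Sigma((Ax+b)_i)$, with $\Sigma$ applied componentwise and $\mathbf 1\in\R^n$. Differentiating via the chain rule gives $\nabla\phi(x) = A^\top\sigma(Ax+b)$, so that $f_{A,b}(t,x) = -\nabla\phi(x)$ for every $t$. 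Because $\sigma$ is non-decreasing its antiderivative $\Sigma$ is convex; hence each summand $\Sigma((Ax+b)_i)$ is a convex function precomposed with an affine map and is therefore convex, and $\phi$, being a finite sum of such terms, is convex and Fr\'echet-differentiable.

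Next I would pin down the smoothness constant. Using $\|A^\top\| = \|A\|$, the componentwise $L$-Lipschitz property of $\sigma$, and submultiplicativity of the operator norm, I would bound
\[\|\nabla\phi(x) - \nabla\phi(y)\| = \|A^\top(\sigma(Ax+b) - \sigma(Ay+b))\| \leq \|A\|\, L\, \|A(x-y)\| \leq L\|A\|^2\|x-y\|,\]
so $\nabla\phi$ is $L\|A\|^2$-Lipschitz, i.e.\ $\phi$ is $L\|A\|^2$-smooth. Applying \Cref{thm:bh} with this smoothness constant yields that $\nabla\phi$ is $1/(L\|A\|^2)$-co-coercive:
\[\langle\nabla\phi(y) - \nabla\phi(z), y - z\rangle \geq \frac{1}{L\|A\|^2}\|\nabla\phi(y) - \nabla\phi(z)\|^2.\]
Substituting $\nabla\phi = -f_{A,b}$ and negating both sides reproduces \Cref{eq:monotonicity_f} with $\nu = 1/(\|A\|^2 L)$, as claimed.

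The steps are all routine once the potential has been identified, so I do not anticipate a genuine obstacle; the only point deserving mild care is the existence of a globally defined $C^1$ antiderivative $\Sigma$ and the fact that $\phi$ is differentiable everywhere rather than merely almost everywhere, which is guaranteed precisely because the Lipschitz hypothesis forces $\sigma$ to be continuous. An alternative would be to verify \Cref{eq:monotonicity_f} directly from the componentwise monotonicity and Lipschitz bounds on $\sigma$, but routing through \Cref{thm:bh} is cleaner and makes the gradient-flow-of-a-convex-potential interpretation explicit, which is exactly the structure that \Cref{thm:circle_contr} requires.
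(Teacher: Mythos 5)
Your proposal is correct and matches the paper's own proof essentially step for step: the paper likewise defines the componentwise antiderivative $\psi(t)=\int_0^t\sigma(s)\,\mathrm ds$, observes that the resulting potential composed with $x\mapsto Ax+b$ is convex and $\|A\|^2L$-smooth with gradient $-f_{A,b}$, and invokes the Baillon--Haddad theorem (\Cref{thm:bh}) to obtain the co-coercivity inequality \Cref{eq:monotonicity_f} with $\nu=1/(\|A\|^2L)$. Your only addition is spelling out the Lipschitz estimate $\|A^\top(\sigma(Ax+b)-\sigma(Ay+b))\|\leq L\|A\|^2\|x-y\|$, which the paper leaves implicit.
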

\begin{proof}
	Since $\sigma$ is non-decreasing and $L$-Lipschitz, the function $\psi: \R\to\R$ given by $\psi(t) = \int_0^t \sigma(s)\,\mathrm ds$ is convex and $L$-smooth. Hence, $\phi:\R^n \to \R$ given by
	\[\phi(x) = \sum_{i=1}^n \psi(x_i)\]
	is convex and $L$-smooth. The functional $x \mapsto\phi(Ax + b)$ is convex and by the chain rule it has gradient equal to $-f_{A, b}$ and it is $\|A\|^2L$-smooth. By the comments preceding this lemma, the vector field $f_{A,b}$ satisfies~\Cref{eq:monotonicity_f} with $\nu =1/( \|A\|^2 L)$.
\end{proof}
By the previous observations, we can propose a natural non-expansive neural network architecture as follows. Assume that we have a budget for the number of blocks, \texttt{n\_blocks}. Given an $r>0$ such that we have an $r$-circle contractive RK method $\Phi_h$ and an $L$-Lipschitz increasing activation function $\sigma$, consider linear operators $A^1,\ldots, A^\texttt{n\_blocks}$, biases $b^1,\ldots, b^\texttt{n\_blocks}$ and stepsizes $h^1, \ldots, h^\texttt{n\_blocks}$ and define the operator $\Xi$ by
\[\Xi = \Xi^\texttt{n\_blocks}\circ\ldots\circ \Xi^1,\]
where $\Xi^i(x) = \Phi_{h^i}(0, x, f_{A^i, b^i})$ is one numerical integration step along the vector field $f_{A^i, b^i}$ as defined in~\Cref{lemma:arch_stiffness}. \Cref{lemma:arch_stiffness} and~\Cref{thm:circle_contr} ensure that $\Xi$ is non-expansive as long as
\[h^i \|A^i\|^2L \leq 2r.\]
There are various ways in which this bound can be maintained during training, and the power method can be used to compute the required operator norm: as an example, it is possible to alternate gradient update steps of an optimiser with steps that scale the operators down to satisfy the bounds that are violated after the gradient update. An alternative method, which is more faithful to the dynamical systems interpretation, similarly keeps track of the operator norms of the weights but splits the interval into multiple smaller time steps to guarantee the bound when the gradient updates cause a violation of the bound. This approach, the adaptive approach, is the one that we will take in~\Cref{sec:experiments}. For any explicit RK method, the corresponding neural network $\Xi$ is a residual network. For the forward Euler method, the network takes the particularly simple form shown in~\Cref{alg:euler_lipschitz}.

\begin{algorithm}[!htb]
	\caption{Forward Euler method for non-expansive ODE networks}
	\label{alg:euler_lipschitz}
	\begin{algorithmic}
		\State \textbf{input:} vector $x$
		\State \textbf{parameters:} step sizes $h^i > 0 $, linear operators $A^1,\ldots, A^\texttt{n\_blocks}$ satisfying $h^i\|A^i\|^2L\leq 1$ for $i=1,\ldots,\texttt{n\_blocks}$, and biases $b^1, \ldots, b^\texttt{n\_blocks}$
		\State $z^0 \leftarrow x$
		\For{$i\leftarrow 1, \ldots, \texttt{n\_blocks}$}
		\State $z^i \leftarrow z^{i - 1} - h^i (A^i)^\top\sigma(A^i z^{i - 1} + b^i)$
		\EndFor\\
		\Return $\Xi(x) = z^\texttt{n\_blocks}$
	\end{algorithmic}
\end{algorithm}

As mentioned before, we are focused in this chapter on explicit RK methods since they do not require the solution of a (potentially difficult) non-linear equation at each step. It may be interesting to note, however, what can happen when an implicit numerical method is used, such as the backward Euler method. In that case, each update step in~\Cref{alg:euler_lipschitz} needs to be replaced by solving the equation
\[z^i =z^{i-1} - h^i f_{A^i, b}(z^{i})= z^{i - 1} - h^i (A^i)^\top \sigma(A^i z^i + b^{i}).\]
Recalling from the proof of~\Cref{lemma:arch_stiffness} that $-f_{A,b}$ is the gradient of a convex functional $\phi(A\cdot + b)$, this shows that the update step is given by
\[z^i = (\id + h^i\nabla \phi(A^i\cdot + b^i))^{-1}(z^{i-1})=: \prox_{h^i \phi(A^i\cdot + b^i)}(z^{i-1}),\]
which is the defining equation of the proximal operator~\cite{moreau_proprietes_1963}, a mathematical object that has been studied in great detail in the field of convex analysis. Whether considering it from the ODE viewpoint (the backward Euler method is BN-stable~\cite{burrage_stability_1979}) or from the convex analysis and monotone operator theory viewpoint (proximal operators are non-expansive as the resolvents of monotone operators~\cite[Chapter 23]{bauschke_convex_2011}), proximal operators $\prox_{h \phi(A\cdot + b)}$ are well-defined and non-expansive regardless of the step size $h>0$ and the smoothness of $\phi(A\cdot + b)$. This unconditional stability comes at a cost, though: for general $A$, computing the proximal operator of $\prox_{h \phi(A\cdot + b)}$ is not easy (and becomes more difficult as the condition number of $A$ increases). This issue can be overcome by restricting $A$ to certain special sets of operators (for instance satisfying certain orthogonality properties), in which case the proximal operator may be explicitly computable. This approach is similar to the one taken in~\cite{hasannasab_parseval_2020,hertrichConvolutionalProximalNeural2021} though note that it may be difficult to enforce these constraints on convolution-type linear operators. On the other hand, the operator norm constraints that we are required to enforce with explicit numerical integration methods can be easily controlled using power iteration~\cite{golub_eigenvalue_2000}; all we need is the ability to apply the operator and its adjoint to test vectors.
\subsection{A more detailed look at the architecture for the forward Euler method}
\label{sec:euler_averaged}
When the numerical integrator used is the forward Euler method, as described in~\Cref{alg:euler_lipschitz}, straightforward computations can be used to establish the same results guaranteed by the machinery of~\Cref{thm:circle_contr}, and some more nuanced results. Indeed, it is possible to choose the stepsizes in such a way that the resulting neural network is not just non-expansive, but in fact is also averaged:
\begin{definition}[Definition 4.23 from~\cite{bauschke_convex_2011}]
	\label{def:averaged}
	Suppose that $A:\mathcal X\to \mathcal X$ is an operator mapping a Hilbert space $\mathcal X$ into itself and that $\alpha\in(0, 1)$. We call $A$ an $\alpha$-averaged operator if there is a non-expansive $T:\mathcal X\to \mathcal X$ such that $A = (1-\alpha) \id + \alpha T$. We may also leave $\alpha$ unspecified, in which case we just call $A$ an averaged operator if there is an $\alpha\in (0, 1)$ such that $A$ is $\alpha$-averaged.
\end{definition}
Note that the triangle inequality shows that an averaged operator is non-expansive. In addition, averaged operators allow for convergent fixed point iterations, whereas ordinarily non-expansive operators enjoy no such guarantees. This is of crucial importance in certain applications, such as Plug-and-Play algorithms, where modelling denoisers using non-expansive operators is not enough to prevent divergence, but using averaged operators can ensure convergence~\cite{sunOnlinePlugandPlayAlgorithm2019, hertrichConvolutionalProximalNeural2021}. For our analysis here, let us note the following fact:
\begin{lemma}
	\label{lemma:symmetric_jacobian_averaged}
	Suppose that $\Xi: \R^n\to\R^n$ is $C^1$ with symmetric Jacobian $D\Xi(x)$ everywhere and that $\alpha\in(0, 1)$. Then $\Xi$ is $\alpha$-averaged if and only if
	\[\spec(D\Xi(x)) \subset [1 - 2\alpha, 1]\]
	for all $x\in \R^n$. Note that the condition that the Jacobian is everywhere symmetric is equivalent to asking that $\Xi = \nabla f$ for some underlying functional $f:\R^n\to\R$.
\end{lemma}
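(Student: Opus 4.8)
The plan is to reduce the averagedness of $\Xi$ to a pointwise spectral condition on its Jacobian by exploiting the symmetry of $D\Xi(x)$ and the definition $\Xi = (1-\alpha)\id + \alpha T$. First I would recall the definition: $\Xi$ is $\alpha$-averaged precisely when the operator $T := \id + \frac{1}{\alpha}(\Xi - \id)$ is non-expansive, i.e.\ $1$-Lipschitz. Since $\Xi$ is $C^1$, so is $T$, and a $C^1$ map is $1$-Lipschitz if and only if $\|DT(x)\|\leq 1$ (operator norm) for every $x$; this is the standard mean-value / fundamental-theorem-of-calculus argument along line segments, using convexity of $\R^n$. Thus $\Xi$ is $\alpha$-averaged if and only if $\bigl\|\id + \tfrac{1}{\alpha}(D\Xi(x)-\id)\bigr\|\leq 1$ for all $x$.

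Next I would use the symmetry hypothesis. Because $D\Xi(x)$ is symmetric, $DT(x) = \id + \frac{1}{\alpha}(D\Xi(x)-\id)$ is also symmetric, so its operator norm equals its spectral radius, and its eigenvalues are real. If $\mu$ ranges over $\spec(D\Xi(x))$, then the eigenvalues of $DT(x)$ are exactly $1 + \frac{1}{\alpha}(\mu - 1)$. The condition $\|DT(x)\|\leq 1$ therefore reads
\[
\left| 1 + \frac{1}{\alpha}(\mu - 1)\right| \leq 1 \quad\text{for all}\quad \mu\in\spec(D\Xi(x)).
\]
Unwinding this inequality, $-1 \leq 1 + \frac{1}{\alpha}(\mu-1)\leq 1$, multiplying through by $\alpha>0$ and rearranging, gives $1 - 2\alpha \leq \mu \leq 1$. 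Hence $\|DT(x)\|\leq 1$ for all $x$ is equivalent to $\spec(D\Xi(x))\subset[1-2\alpha, 1]$ for all $x$, which is exactly the claimed characterization.

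Finally, for the parenthetical remark, I would recall the symmetry-of-second-derivatives / Poincar\'e-lemma fact: a $C^1$ vector field $\Xi$ on (simply connected) $\R^n$ has everywhere-symmetric Jacobian if and only if it is a gradient field $\Xi=\nabla f$ for some $C^2$ potential $f$. One direction is Clairaut's theorem ($D\Xi = D^2 f$ is symmetric when $f\in C^2$); the converse constructs $f(x)=\int_0^1 \langle \Xi(tx), x\rangle\,\mathrm dt$ and verifies $\nabla f = \Xi$ using the symmetry of $D\Xi$, which is where the hypothesis enters. The main subtlety I would be careful about is the regularity direction of the equivalence $1$-Lipschitz $\iff$ $\|DT\|\leq 1$: the ``only if'' is immediate from differentiating difference quotients, while the ``if'' needs the integral mean-value estimate $\|T(x)-T(y)\|\leq \sup_{t}\|DT(\cdot)\|\,\|x-y\|$ along the segment from $y$ to $x$. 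This is routine but is the one place where passing between the global Lipschitz statement and the pointwise spectral statement could be done sloppily; everything else is linear algebra on symmetric matrices.
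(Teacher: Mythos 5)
Your proof is correct and takes essentially the same route as the paper's: both reduce $\alpha$-averagedness to non-expansiveness of $T = \id + \frac{1}{\alpha}(\Xi - \id)$, establish sufficiency via the fundamental-theorem-of-calculus estimate along segments, and establish necessity by differentiating difference quotients (the paper spells this out as an eigenvector-perturbation argument with a $o(h)$ remainder, which is the same computation), with the symmetry of $D\Xi(x)$ used to identify the operator norm of $DT(x)$ with its spectral radius and translate the spectrum. Your Poincar\'e-lemma justification of the parenthetical remark is a small addition the paper states without proof, but otherwise the two arguments coincide.
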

\begin{proof}
	Showing that $\Xi$ is $\alpha$-averaged is equivalent to showing that $\Theta = (\Xi - \id)/\alpha + \id$ is non-expansive. Furthermore $\spec(\Theta) = \spec(\Xi)/\alpha + 1-1/\alpha $, so if $\spec(D\Xi(x))\subset [1-2\alpha, 1]$, then $\spec(D\Theta(x))\subset [-1, 1]$. By the fundamental theorem of calculus, we also have  that
	\[\Theta(x) - \Theta(y)  =\Big[\int\limits_0^1 D\Theta(y + t(x - y))\,\mathrm d t\Big] (x - y),\]
	so
	\[\|\Theta(x) - \Theta(y) \| \leq \Big[\sup_{z\in \R^n} \|D\Theta(z)\|\Big] \|x - y \|\leq \|x -y\|,\]
	as desired. Conversely, suppose that we do not have that $\spec(D(\Xi))\subset [1-2\alpha, 1]$ everywhere. In particular, there is some $x\in \R^n$, such that there is an eigenvector $v\in \R^n$ (assume that $\|v\|=1$) of $D\Theta(x)$ with eigenvalue outside of $[-1, 1]$: $D\Theta(x) v = \lambda v$ with $|\lambda| > 1$. Hence, we have
	\[\Theta(x + hv) - \Theta(x) = h D\Theta(x) v + R(h) = \lambda h v + R(h)\]
	with $R(h)$ a remainder term that satisfies $R(h) = o(h)$ as $h\to 0$. By the triangle inequality, this implies that
	\[\|\Theta(x + h v) - \Theta(x)\| \geq |\lambda| \|h v\| - |R(h)|.\]
	For small enough $h$, this inequality tells us that $\Theta$ is not non-expansive, and by the preceding argument, $\Xi$ is not $\alpha$-averaged.

\end{proof}
Recall that a single layer of the proposed architecture is given by $\Xi(x) = x - h A^\top\sigma(Ax + b)$, with the same setting in mind as described in~\Cref{lemma:arch_stiffness}. There we saw that $\Xi$ is the gradient of the functional $x\mapsto \|x\|^2/2 - h \phi(Ax + b)$, where $\phi$ is convex and $L$-smooth, so that $\spec(D^2\phi(x)) \subset [0, L]$ for each $x\in \R^n$. Hence, since we have $D\Xi(x) = \id - h A^\top D^2\phi(Ax + b) A$, we find that
\[\spec(D\Xi(x)) \subset [1 - h\|A\|^2 L, 1].\]
Combining this with~\Cref{lemma:symmetric_jacobian_averaged} immediately gives the following result if the activation function $\sigma$ is $C^1$. This is not required, however, for the result to be valid; any $L$-Lipschitz $\sigma$, such as $\sigma(x)=\texttt{ReLU}(x) = \max\{0, x\}$, will work equally well. The argument for general $L$-Lipschitz $\sigma$ is given below:
\begin{theorem}
	\label{thm:averaged_layer}
	Let $\sigma, A, b$ be as in~\Cref{lemma:arch_stiffness} and let $\alpha\in(0, 1)$. A single layer of the proposed architecture, $\Xi(x) = x - h A^\top\sigma(Ax+ b)$, is $\alpha$-averaged if
	\begin{equation}
		\label{ineq:averaged_layer_bound}
		h \|A\|^2 \leq 2\alpha /  L.
	\end{equation}
\end{theorem}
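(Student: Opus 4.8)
The plan is to appeal directly to \Cref{def:averaged} rather than to the Jacobian criterion of \Cref{lemma:symmetric_jacobian_averaged}, so that no differentiability of $\sigma$ is required. By definition, $\Xi$ is $\alpha$-averaged precisely when the operator $T = \id + \alpha^{-1}(\Xi - \id)$ is non-expansive. Since $\Xi - \id = -h A^\top \sigma(A\cdot + b)$, this candidate operator is $T(x) = x - (h/\alpha) A^\top \sigma(Ax + b)$, and the whole argument reduces to verifying that $\|T(x) - T(y)\| \leq \|x - y\|$ for all $x, y \in \R^n$.

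First I would set $w = x - y$ and $s = \sigma(Ax + b) - \sigma(Ay + b)$, and expand the squared norm:
\[
\|T(x) - T(y)\|^2 = \|w\|^2 - \frac{2h}{\alpha}\langle w, A^\top s\rangle + \frac{h^2}{\alpha^2}\|A^\top s\|^2.
\]
The cross term rewrites as $\langle w, A^\top s\rangle = \langle Aw, s\rangle = \langle (Ax + b) - (Ay + b), s\rangle$, which is exactly the pairing appearing in a co-coercivity estimate for $\sigma$.

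The key step is to supply that co-coercivity. As in the proof of \Cref{lemma:arch_stiffness}, the componentwise potential $\phi(x) = \sum_{i} \psi(x_i)$ with $\psi(t) = \int_0^t \sigma(s)\,\mathrm ds$ is convex, $C^1$ (because $\sigma$ is continuous), and $L$-smooth, with $\nabla \phi = \sigma$ applied componentwise. The Baillon-Haddad theorem (\Cref{thm:bh}) then gives the $1/L$-co-coercivity $\langle (Ax + b) - (Ay + b), s\rangle \geq L^{-1}\|s\|^2$, valid for the merely Lipschitz $\sigma$ with no recourse to its Jacobian. Combining this lower bound on the cross term with the operator-norm bound $\|A^\top s\| \leq \|A\|\,\|s\|$ (using $\|A^\top\| = \|A\|$) yields
\[
\|T(x) - T(y)\|^2 \leq \|w\|^2 + \frac{h}{\alpha}\Big(\frac{h\|A\|^2}{\alpha} - \frac{2}{L}\Big)\|s\|^2.
\]
The bracketed factor is non-positive exactly when $h\|A\|^2 \leq 2\alpha/L$, which is the hypothesis~\Cref{ineq:averaged_layer_bound}, so $\|T(x) - T(y)\| \leq \|w\| = \|x - y\|$ and $T$ is non-expansive, as required.

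I do not expect a serious obstacle here; the only point requiring care is the one the authors flag, namely that $\sigma$ need not be differentiable (e.g.\ ReLU). The Jacobian-spectrum route through \Cref{lemma:symmetric_jacobian_averaged} is therefore unavailable, and the whole weight of the argument rests on replacing it with the co-coercivity inequality obtained from the convex $L$-smooth potential. Once that substitution is made, the computation is a routine sign analysis of the coefficient of the single scalar quantity $\|s\|^2$, and the averaging threshold falls out immediately.
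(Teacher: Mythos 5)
Your proof is correct and follows essentially the same route as the paper's: both reduce $\alpha$-averagedness to non-expansiveness of $T = \id + \alpha^{-1}(\Xi - \id)$, invoke the Baillon--Haddad co-coercivity of \Cref{thm:bh} (thereby avoiding any differentiability assumption on $\sigma$), and conclude by a sign analysis of the coefficient of the single quadratic term under the hypothesis $h\|A\|^2 \leq 2\alpha/L$. The only cosmetic difference is that you apply co-coercivity to $\sigma$ in the pre-activation variables and insert the bound $\|A^\top s\| \leq \|A\|\,\|s\|$ separately, whereas the paper applies \Cref{thm:bh} to the composite $\|A\|^2 L$-smooth potential $\phi(A\cdot + b)$ and works with $\Lambda = h\bigl(f_{A,b}(x) - f_{A,b}(y)\bigr)$ directly; the two computations are otherwise equivalent and yield the identical threshold.
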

\begin{proof}
	Note that an operator $\Xi$ is $\alpha$-averaged if and only if $(\Xi - \id) / \alpha + \id$ is non-expansive. Furthermore, we note that $\Xi - \id = h f_{A, b}$, where $-f_{A, b}$ is the gradient of an $\|A\|^2L$-smooth convex functional, as defined in the proof of~\Cref{lemma:arch_stiffness}. By~\Cref{thm:bh} we have that
	\[  \frac{1}{\|A\|^2 L} \|f_{A, b}(x) - f_{A, b}(y)\|^2\leq\langle -f_{A, b} (x) +f_{A, b}(y), x -y\rangle\leq\|A\|^2 L \|x - y\|^2,\]
	so, if we write $\Lambda = (\Xi - \id)(x) - (\Xi - \id)(y)$ to reduce clutter, we have
	\[-h \|A\|^2 L \|x - y\|^2 \leq \langle \Lambda, x - y\rangle \leq  -\frac{1}{h\|A\|^2L} \|\Lambda\|^2.\]
	In particular, $\langle \Lambda, x -y\rangle + \|\Lambda\|^2 /(h\|A\|^2L)\leq 0$. Upon expanding the squared norm, we find that
	\begin{align*}
		\| ((\Xi - \id) / \alpha + \id)(x) - & ((\Xi - \id)/\alpha + \id)  (y)\|^2                                                                      \\&=\frac{\|\Lambda\|^2}{\alpha^2} + 2\frac{\langle \Lambda, x -y\rangle}{\alpha} + \|x - y\|^2 \\
		                                     & \leq \|x - y\|^2 +\frac{2}{\alpha}\Big(\langle \Lambda,x -y\rangle + \frac{\|\Lambda\|^2}{2\alpha}\Big).
	\end{align*}
	By the above comments, we see that $(\Xi - \id)/\alpha + \id$ is non-expansive when $2\alpha\geq h\|A\|^2L$, which can be rewritten into $h \|A\|^2 \leq 2\alpha / L$.
\end{proof}

Finally, the following result guarantees that the overall network will be averaged as long as each layer is averaged, with a corresponding $\alpha$ that can be controlled:
\begin{theorem}[Proposition 4.32 from~\cite{bauschke_convex_2011}]
	\label{thm:averaged_composition}
	\begin{sloppypar}  Suppose that $\Xi^1,\ldots,\Xi^m$ are operators $\Xi^i:\mathcal X\to \mathcal X$ on a Hilbert space $\mathcal X$ and that each $\Xi^i$ is $\alpha_i$-averaged for some $\alpha_i\in (0,1)$. Then $\Xi^m\circ \ldots\circ \Xi^1$ is $\alpha$-averaged, where
		\[\alpha = \frac{m}{m - 1+ \min\limits_{i=1,\ldots, m}(1/\alpha_i)}.\]
	\end{sloppypar}
\end{theorem}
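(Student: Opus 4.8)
The plan is to reduce everything to the single quadratic inequality that characterises $\alpha$-averagedness and then telescope it along the intermediate iterates of the composition. First I would record the equivalent characterisation: an operator $B:\mathcal X\to\mathcal X$ is $\alpha$-averaged if and only if
\[\|Bx - By\|^2 + \frac{1-\alpha}{\alpha}\,\|(\id - B)x - (\id-B)y\|^2 \leq \|x - y\|^2\]
for all $x, y \in \mathcal X$. This follows from \Cref{def:averaged} by writing $B = (1-\alpha)\id + \alpha T$, setting $u = x-y$ and $w = Tx - Ty$, and expanding both sides; the cross terms cancel and the inequality collapses to $\alpha\|w\|^2 \leq \alpha\|u\|^2$, i.e.\ to the non-expansiveness of $T$. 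This is a routine computation that I would include for completeness, since I need both directions.

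Next I would telescope. Set $T = \Xi^m\circ\cdots\circ\Xi^1$, fix $x,y$, put $x_0 = x$, $y_0 = y$, and define iterates $x_i = \Xi^i x_{i-1}$, $y_i = \Xi^i y_{i-1}$, so that $x_m = Tx$ and $y_m = Ty$. Writing $d_i = (x_{i-1}-x_i) - (y_{i-1}-y_i)$, the characterisation applied to each $\alpha_i$-averaged $\Xi^i$ gives
\[\|x_i - y_i\|^2 + \frac{1-\alpha_i}{\alpha_i}\,\|d_i\|^2 \leq \|x_{i-1}-y_{i-1}\|^2,\]
and summing over $i=1,\ldots,m$ telescopes the squared distances to yield
\[\|x_m - y_m\|^2 + \sum_{i=1}^m \frac{1-\alpha_i}{\alpha_i}\,\|d_i\|^2 \leq \|x_0 - y_0\|^2.\]
Since $\sum_{i=1}^m d_i = (x_0 - x_m) - (y_0 - y_m) = (\id - T)x - (\id-T)y =: D$, comparing this with the target characterisation for $T$ shows that it suffices to prove $\tfrac{1-\alpha}{\alpha}\|D\|^2 \leq \sum_{i=1}^m \tfrac{1-\alpha_i}{\alpha_i}\|d_i\|^2$.

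For the final step I would use two elementary bounds. Because $t\mapsto (1-t)/t$ is decreasing on $(0,1)$, every coefficient satisfies $\tfrac{1-\alpha_i}{\alpha_i} \geq \tfrac{1-\alpha^*}{\alpha^*}$ with $\alpha^* = \max_i \alpha_i$ (equivalently $1/\alpha^* = \min_i(1/\alpha_i)$), and the Cauchy--Schwarz inequality gives $\|D\|^2 = \|\sum_i d_i\|^2 \leq m\sum_i\|d_i\|^2$. Combining these,
\[\sum_{i=1}^m \frac{1-\alpha_i}{\alpha_i}\,\|d_i\|^2 \geq \frac{1-\alpha^*}{\alpha^*}\sum_{i=1}^m\|d_i\|^2 \geq \frac{1-\alpha^*}{m\alpha^*}\,\|D\|^2,\]
so the required inequality holds as soon as $\tfrac{1-\alpha}{\alpha} \leq \tfrac{1-\alpha^*}{m\alpha^*}$. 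Solving $\tfrac{1-\alpha}{\alpha} = \tfrac{1-\alpha^*}{m\alpha^*}$ for $\alpha$ gives exactly $\alpha = \tfrac{m\alpha^*}{(m-1)\alpha^*+1} = \tfrac{m}{m-1+\min_i(1/\alpha_i)}$, and one checks $\alpha\in(0,1)$ since $\alpha^*<1$. The main obstacle is not any single step but the two simultaneous relaxations — bounding every averagedness coefficient by its worst value and using Cauchy--Schwarz with constant $m$ — which is precisely where the stated (non-tight) formula with only the minimum and the factor $m$ comes from; a sharper per-operator bound would require tracking the individual $d_i$ rather than collapsing them into $D$.
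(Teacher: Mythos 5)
Your proof is correct: the quadratic characterisation of $\alpha$-averagedness is established accurately (the cross terms do cancel as claimed), the telescoping and the two relaxations (worst coefficient $\alpha^* = \max_i \alpha_i$ plus Cauchy--Schwarz with constant $m$) are valid, and solving $\tfrac{1-\alpha}{\alpha} = \tfrac{1-\alpha^*}{m\alpha^*}$ recovers exactly the stated $\alpha = \tfrac{m}{m-1+\min_i(1/\alpha_i)}$. The paper does not prove this result itself but cites Proposition 4.32 of~\cite{bauschke_convex_2011}, and your argument is essentially the standard proof given there, so there is nothing to fix.
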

In particular, if we are targeting a certain $\alpha\in (0, 1)$ for which our neural network ($m$ layers deep) should be $\alpha$-averaged, we should ask that each layer is $\alpha_i$-averaged with $\alpha_i$ at most
\[\alpha_i \leq \frac{\alpha}{m(1 - \alpha) + \alpha}.\]
By~\Cref{thm:averaged_layer}, we see that this implies that we must use a step size $h = \mathcal O(1/m)$ that decreases to $0$ as the depth $m$ of the network increases. Alternatively, it is possible to get an averaged operator by appealing to~\Cref{def:averaged}: $(1 - \alpha) \id + \alpha \Xi$ will be $\alpha$-averaged as long as $\Xi$ is non-expansive, which we have seen can be guaranteed with a step size independent of the depth of the network.

\section{Experiments}
\label{sec:experiments}
We will study the application of the proposed architectures to tasks where a certain kind of robustness is required: in~\Cref{sec:adv_robustness} we study the robustness of image classifiers to adversarial attacks and in~\Cref{sec:denoising} we consider the robustness of learned image denoisers. Robustness in the latter setting is of particular importance in downstream tasks, such as when the learned denoisers are used to solve inverse problems, as we will see in~\Cref{sec:pnp}. First, though, we will describe the methods that we use to train the networks.
\subsection{Training methods}
\label{sec:training_methods}
We train each network in a supervised manner, by solving a regularised empirical risk minimisation problem. In all experiments, we perform 40,000 iterations of stochastic gradient descent (SGD) with momentum (with momentum parameter $\beta = 0.9$), with a piecewise linear learning rate schedule that ramps up from a minimum learning rate to a maximum learning rate for the first half of the iterations and then down to 0 for the second half of the iterations. The minimum and maximum learning rates to use are found with the method described in~\cite{smithSuperConvergenceVeryFast2018}. Every experiment is run with weight decay, with weighting hyperparameters $\lambda \in \{10^{-5}, 5\cdot 10^{-5}, 10^{-4}, 5\cdot 10^{-4}\}$. The reported results correspond to the setting of the weight decay hyperparameter that performed best in terms of the loss function evaluated on a held-out validation set.

As mentioned before, to guarantee non-expansiveness or averagedness of the network we need to ensure that a bound such as~\Cref{ineq:averaged_layer_bound} in~\Cref{thm:averaged_layer} holds, so we use the power method~\cite{miyato_spectral_2018} to compute spectral norms of each of the learnable linear operators: if $A: \R^n \to\R^m$ is a linear operator and we have an initial estimate of the first left singular vector $v^0\in \R^m$, we iterate
\[u^k \leftarrow \frac{A^\top v^{k-1}}{\|A^\top v^{k-1}\|},\qquad v^k \leftarrow \frac{Au^k}{\|Au^k\|}.\]
Assuming that $v^0$ is not orthogonal to the first left singular vector (this is guaranteed to hold with probability $1$ if $v^0$ is randomly selected from a probability distribution that has a density w.r.t.\ Lebesgue measure), $u^k$ and $v^k$ converge to the first singular vectors of $A$ as $k\to\infty$ and $(u^k)^\top A v^k \to \|A\|$. In addition, it is reasonable to assume that the weights only undergo incremental updates during each training iteration, so that we can warm start the power method using the estimates of the singular vectors from the previous training iteration, followed by just a single iteration of the power method. In practice, the overhead of computing a single iteration of the power method for each block is significantly lower than the cost of forward and backward propagation through the network, as can be seen from the computational cost measurements in \ref{sec:computational_cost}.

The most straightforward way to satisfy the bounds that we require is by alternating gradient update steps with spectral normalisation steps $A\mapsto A/ \|A\|$. In a similar spirit, it is in principle also possible to differentiate through the spectral normalisation step~\cite{miyato_spectral_2018}, which results in different training dynamics and may avoid the tendency of the alternating steps to counteract each other's effect.

Instead of these methods, we find that another approach (which we will refer to as the adaptive approach in what follows) similarly ensures that the bounds of~\Cref{thm:averaged_layer} hold, with the benefits of empirically resulting in high expressive power, a simple implementation (compared to differentiating through the spectral normalisation) and a close connection to the underlying dynamical system. In the adaptive approach, we still use the power method to keep track of the operator norms, but when the bound in~\Cref{ineq:averaged_layer_bound} is violated, we split the integration interval into sufficiently many subintervals (each of equal size) to ensure that the bound is satisfied on each subinterval. Let us show an example of what this means for the approach shown in~\Cref{alg:euler_lipschitz}: after updating the weights of each block using SGD and updating the spectral norm estimate of the weights using the power method, the forward propagation shown in~\Cref{alg:euler_lipschitz} is modified. In a given block with weights $A^i$ and overall step size $h^i$, rather than take a single step $z^i \leftarrow z^{i - 1} - h^i (A^i)^\top\sigma(Az^{i - 1} + b)$, we compute a number of steps
\begin{equation}
    N=\lceil h^i \|A^i\|^2L / (2 r)\rceil
    \label{eq:adaptive_N_steps}
\end{equation}
to subdivide the integration into and take $N$ steps with stepsize $h^i / N$. In fact, since the ceiling defining $N$ is not generally attained (this can be checked after training to be certain), each block will naturally satisfy the constraint in~\Cref{thm:averaged_layer}, making the network averaged by~\Cref{thm:averaged_composition}. The explanation above, and especially \Cref{eq:adaptive_N_steps}, shows why we call this approach adaptive: during training, the operator norms of the linear operators in our non-expansive blocks are not fixed, but splitting each integration interval according to \Cref{eq:adaptive_N_steps} (which adapts to the operator norm $\|A^i\|$) ensures that the steps that we take are small enough to satisfy the bounds discussed in \Cref{sec:nonexpansive_odes} and \Cref{sec:euler_averaged}. Rather than adjusting the dynamics where our constraints are not satisfied as in the usual spectral normalisation approach, we allow the dynamics to evolve during training and adjust our approximation to ensure that the constraints are satisfied.

As a result of the adaptive approach, the depths of our networks are not fixed during training; although the number of learnable parameters is fixed, the network will become deeper during training if the weights grow. We do not take any additional measures to ensure that the weights stay bounded, but we find that this is not necessary in practice. A theoretical argument can be given for this too. Note that the loss functions that we consider are bounded from below. In fact, all of the loss functions that we consider are non-negative, i.e.\ bounded from below by 0. Besides this, we use weight decay, which is equivalent to adding a penalty function of the form $\theta\mapsto \lambda \|\theta\|^2$ to the loss function, where $\theta$ is a vector containing all parameters of the network in question and $\lambda >0$ is the weight decay hyperparameter. As a result, the overall objective function (including the regularisation from the weight decay) is coercive, meaning that it tends to $\infty$ as $\|\theta\|\to \infty$. From this we may deduce that any minimising sequence of the regularised loss function must be bounded. Since the operator norms $\|A^i\|$ of the linear operators in our non-expansive blocks are controlled by the norm of the overall parameter vector, $\|\theta\|$, this gives us that these operator norms also remain bounded along any minimising sequence. As a result, with the adaptive approach, the number of steps into which we subdivide any of the integration intervals also remains bounded, as can be seen from \Cref{eq:adaptive_N_steps}. Hence, we do not expect the depths of our networks to blow up during training.

To initialise the weights of the non-expansive networks that we study, we first use the default initialisation method in PyTorch~\cite{paszke_pytorch_2019} to initialise the convolutional filters and biases, and apply 1,000 iterations of the power method to compute the operator norms of the convolution operations. The filters are then normalised to have operator norm 1 and the singular vectors output by the power method are saved for future iterations.

In all of the networks that we study in the experiments, we choose the activation function $\sigma$ to be the $\texttt{LeakyReLU}$, defined as $\texttt{LeakyReLU}(x)  = \max\{ x, 0.01 x\}$. Evidently, $\texttt{LeakyReLU}$ is $1$-Lipschitz and corresponds to the gradient of a strongly convex functional. For each ODE block in the non-expansive networks, we fix the overall timestep to be the circle contractivity radius $r$ of the integrator used. Combining the weight initialisation mentioned above and the Lipschitz constant of $\texttt{LeakyReLU}$, this ensures that the required constraints are satisfied at initialisation. As previously described, the constraints are maintained during training using the adaptive approach.

All experiments have been implemented using PyTorch~\cite{paszke_pytorch_2019} and were run on a single computational node with a NVIDIA A100-SXM4 GPU with 80 GB of memory. The majority of training runs were finished in fewer than 4 hours (the exceptions being the image denoising experiment with higher order integrators in~\Cref{sec:denoising}, which took up to 12 hours to complete). The code that we have written to implement the methods and experiments is publicly available at \url{https://github.com/fsherry/non-expansive-odes}.

\subsection{Adversarial robustness}
\label{sec:adv_robustness}
Let us study the robustness of the proposed architecture as opposed to a comparable (but unconstrained) classifier based on a ResNet architecture. We will use the CIFAR10 dataset~\cite{krizhevskyLearningMultipleLayers2009}, which consists of 60,000 colour images of size $32\times 32$, with the standard split into 50,000 training images and 10,000 testing images. Additionally, we will hold out 2,500 training images to use as a validation set. Each image comes with one of 10 possible labels, which we will simply refer to as numbers in $\{1,\ldots, 10\}$. We normalise the images so that each channel only takes values in $[0, 1]$, but do not perform any additional normalisation or whitening. The neural networks that we consider implement functions between $\R^{32\times 32\times 3}$ and $\R^{10}$, the outputs of which are interpreted as the scores assigned to each label. Finally, given such a function $\Phi:\R^{32\times 32\times 3}\to \R^{10}$ its classification of an input image $x$ is given by $\argmax_{i\in \{1,\ldots, 10\}} \Phi(x)_i$.

Oftentimes, the scores output by a classifier are interpreted probabilistically: applying the softmax function, we transform the outputs of the network into probabilities, which can be fit to the true labels by minimising a cross-entropy loss function. In our setting, however, we will not take this approach, instead opting for a loss function that more directly encourages the classifier to have a large margin. Recall that the margin of a classifier at an input $x$ with true label $y$ is given by $\Phi(x)_y - \argmax_{i\in \{1,\ldots, 10\} \setminus \{y\}} \Phi(x)_i$. With this definition, images that are incorrectly classified correspond to a negative margin, while those that are correctly classified correspond to a positive margin. If we can lower bound the margin at a correctly classfied image, while upper bounding the Lipschitz constant of $\Phi$ we can certify that $\Phi$ equally classifies all images in a ball around this image~\cite{tsuzukuLipschitzMarginTrainingScalable2018}. In particular, maximising the margin of a classifier makes sense when we constrain its Lipschitz constant. In this context, it is natural to use a multi-class hinge loss function:
\begin{equation}
	\min_{\Psi} \frac{1}{N_\textrm{train}} \sum_{i=1}^{N_\textrm{train}} \sum_{k=1}^{10} \max\{0, \mu - \Phi(x_i)_{y_i} - \Phi(x_i)_k\},
	\label{prob:loss_classification}
\end{equation}
with $\mu$ a hyperparameter that we fix to be equal to $0.1$ in our experiments. This choice was made by considering values of $\mu \in \{0.1, 0.3, 0.5, 0.7, 1\}$ and comparing the validation accuracies of the baseline ResNet (described below) trained with each of these values. It is worth noting that we found the validation accuracies for all of these choices of $\mu$ to be very close to each other, suggesting that the results are relatively insensitive to the choice of $\mu$. Furthermore, we will use data augmentation, effectively replacing the inner term in~\Cref{prob:loss_classification} by an expectation over the augmentations, and $x_i$ by $\Delta(x_i)$ with $\Delta$ the augmentation. We compose an augmentation that randomly erases pixels, an augmentation that randomly crops out parts of the images and an augmentation that applies a random horizontal flip. For the training of all networks we use minibatches of size 128 in each SGD update.

We will consider two comparable network architectures, which are essentially based on the wide ResNet architecture~\cite{zagoruykoWideResidualNetworks2016}, which has been shown to perform very well for image classification: an architecture we will refer to as NonExpNet which uses our proposed nonexpansive components, and an architecture we will call ResNet which is similar but unconstrained. In particular, by appropriately setting the weights of the ResNet, we will be able to recover a NonExpNet. Both classifiers take the form
\[\Phi = A_\textrm{lin} \circ \textrm{pool}_{\textrm{global}} \circ \mathop{\bigcirc}_{i=1}^3 \Big[\Xi^i \circ \textrm{pool}_{2\times 2}\circ A_{\textrm{lift},i} \Big] .\]
Here $A_\textrm{lin}$ is a final linear layer, $A_{\textrm{lift}, i}$ are $1\times 1$ convolutions increasing the number of channels, $\textrm{pool}_{2\times 2}$ are average pooling operations with $2\times 2$ kernels and a stride of $2$ and $\textrm{pool}_{\textrm{global}}$ is a global average pooling operation (which collapses the spatial extent of the remaining image into a single pixel). In the case of the ResNet, each $\Xi^i$ is a composition of 5 simple ResNet blocks of the form $x \mapsto B\sigma(Ax +b)$, whereas in the case of the NonExpNet, each $\Xi^i$ takes the form shown in~\Cref{alg:euler_lipschitz} with 5 blocks. Setting $B=-A^\top$ in the ResNet, we recover the form of the NonExpNet, showing that the ResNet is at least as expressive as the NonExpNet before it has been allowed to grow deeper with the adaptive training method. Since the adaptive training method deepens the NonExpNet, but does not add parameters, we do not expect the expressiveness of the NonExpNet to change significantly as it grows deeper. With the particular configuration used in our experiments, the NonExpNet has 254,810 trainable parameters, while the ResNet has 497,290 trainable parameters. Recall from~\Cref{sec:training_methods} that we use an adaptive training approach for NonExpNet, meaning that its depth is generally allowed to grow during training, but its number of trainable parameters remains constant throughout training. This is done to enforce the constraints that we require to ensure nonexpansiveness. On the other hand, since we do not enforce constraints on the ResNet, its depth is constant during training. We report results for the weight decay hyperparameters that we found to give the highest accuracy on the validation set: for the baseline ResNet, this corresponds to $\lambda = 10^{-5}$, and for NonExpNet we found that $\lambda =10^{-4}$ was the best choice. For the experiment in which we considered adversarial training to train the baseline ResNet, we reused the weight decay hyperparameter that we found to be best for ``normally'' training the baseline ResNet.

To implement the adversarial attacks, we use the Foolbox package~\cite{rauber2017foolboxnative}, applying a projected gradient descent (PGD) attack (which in fact alternates gradient \emph{ascent} steps on the loss function with projections onto a ball around the clean image). More specifically, we consider an attack that uses an $\ell^2$-norm constraint and perform $\texttt{n\_iter}=100$ iterations of the projected gradient descent method (see~\Cref{alg:l2_pgd}), as in~\cite{madryTowardsDeepLearning2018}. Differently than in~\cite{madryTowardsDeepLearning2018}, where the main focus is on $\ell^\infty$-norm constraints, we consider the $\ell^2$ threat model as the proposed networks have stability guarantees with respect to the $\ell^2$-norm. Projected gradient descent is a fully white-box attack (it assumes that we can evaluate and backpropagate through the classifier), and depending on the loss function $\ell$ used, we can generate different kinds of attacks, including untargeted or targeted attacks. While we use the multi-class hinge loss of \Cref{prob:loss_classification} for training, we use the cross-entropy loss,
\[\ell(\hat y, y) = -\sum\limits_{k=1}^{10} y_k \log(\hat y_k),\]
within \Cref{alg:l2_pgd} to generate untargeted attacks to evaluate robustness, as is standard procedure in the literature~\cite{madryTowardsDeepLearning2018}.

\begin{algorithm}[ht!]
	\caption{$\ell^2$-PGD adversarial attack}
	\label{alg:l2_pgd}
	\begin{algorithmic}
		\State \textbf{inputs:} image $x$, class $y$, loss function $\ell$, classifier $\Phi$, step size $\tau$, perturbation size $\eps$, number of iterations $\texttt{n\_iter}$
		\State Sample initial random $\delta\in B_{\ell^2}(0, \eps)$
		\For{$i\leftarrow 1,\ldots, \texttt{n\_iter}$}
		\State $g\leftarrow \nabla_\delta \ell(\Phi(x + \delta), y)$
		\State $\delta \leftarrow \delta + \tau g / \|g \|$
		\State $\delta \leftarrow \textrm{project}_{B_{\ell^2}(0, \eps)}(\delta)$
		\EndFor\\
		\Return $\delta$
	\end{algorithmic}
\end{algorithm}

Besides comparing to the baseline ResNet, trained in the usual way, by solving the empirical risk minimisation problem of~\cref{prob:loss_classification}, we will compare to an adversarially trained ResNet, which uses exactly the same architecture, but is explicitly trained for adversarial robustness. We will refer to this adversarially trained ResNet as ResNet-AT. The training procedure is similar to the usual training by SGD, except that at each iteration, the minibatch of clean images $\{x_i\}_{i}$ is replaced by a minibatch of corresponding perturbed images $\{\tilde x_i=x_i + \delta_i \}_i$ generated using \cref{alg:l2_pgd}. Since the adversarial perturbations depend on the state of the network, these adversarial minibatches must be regenerated at each iteration, which takes an additional $\texttt{n\_iter}$ forward and backward passes through the network, so that total training time is multiplied by a factor of $\texttt{n\_iter} + 1$ compared to the ordinary ResNet. As a result, it is necessary to choose \texttt{n\_iter} smaller during adversarial training than at test time to ensure computational feasibility. We choose $\texttt{n\_iter}=7$ for adversarial training and generate adversarial samples according to \cref{alg:l2_pgd} with $\eps=0.5$.

\begin{figure}[!htb]
	\centering
	\includegraphics[scale=1]{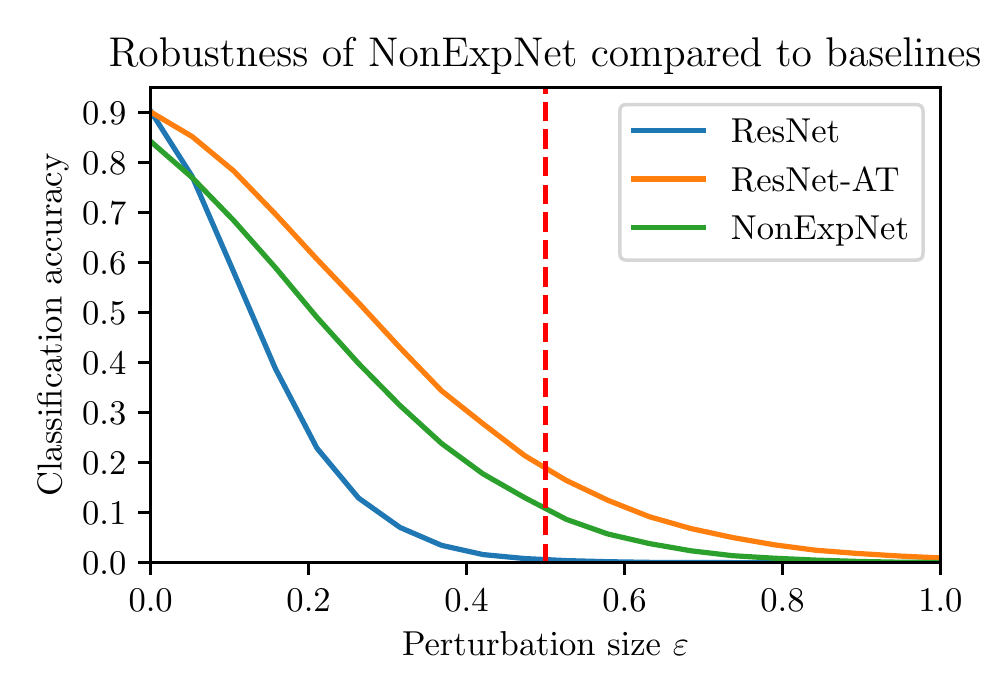}
	\caption{A comparison of the adversarial robustness of the proposed NonExpNet and a similar ResNet trained to classify CIFAR10 images. The baseline ResNet was trained both ordinarily (simply denoted ``ResNet'') and with adversarial training (denoted ``ResNet-AT''). The dashed red line indicates the perturbation size that was used to generate the adversarial examples for adversarial training. The adversarial attack takes the form of an $\ell^2$-PGD attack with 100 iterations, with the robust accuracy curves computed on the test set.}
	\label{fig:robustness_curves}
\end{figure}

The result of applying such an attack to the networks described above is shown in~\Cref{fig:robustness_curves}: we compute attacks of various sizes $\eps$ (corresponding to the size of the $\ell^2$-ball within which we search for an attack) for each image in the test set and compute how well both networks continue to perform on the attacked images. Clearly, the unconstrained ResNet has the highest clean accuracy (accuracy at $\eps=0$), namely 90.2\%, in comparison to the adversarially trained ResNet-AT, which achieves a clean accuracy of 90.1\% and the NonExpNet, which achieves a clean accuracy of 84.2\%. Because we are not partial to any particular perturbation size, we compute the areas under the curves (AUCs) in \Cref{fig:robustness_curves} to get a quantitative impression of the robustness. In particular, this can be interpreted as the average robust accuracy over the range of perturbation sizes that we are studying. This metric shows ResNet-AT to be most robust, with an AUC of 0.304 (average robust accuracy of 30.4\%), NonExpNet next with an AUC of 0.234 (average robust accuracy of 23.4\%), and the ResNet least robust, with an AUC of 0.142 (average robust accuracy of 14.2\%). Since we have noted that the ResNet can generally be expected to be at least as expressive as the NonExpNet, the difference in clean accuracy may be explained by the difference in expressiveness. More striking is the behaviour that can be qualitatively observed in \Cref{fig:robustness_curves}: the slopes of the plots for NonExpNet and ResNet-AT are almost equal and significantly smaller than the slope of the plot for the baseline ResNet. In fact, the difference in AUC between NonExpNet and ResNet-AT (7\%) can almost completely be explained by the shift corresponding to the difference in their clean accuracies (6.9\%). Bearing in mind that NonExpNet was trained with a ``normal'' training procedure (i.e.\ no adversarial images were seen during training), this shows the effectiveness of the inductive bias of non-expansiveness by itself in improving robustness of ResNet-styled image classifiers.

\subsection{Nonexpansive neural networks for denoising}
\label{sec:denoising}
Let us now proceed by studying the performance of the proposed networks on an image denoising task, comparing to a standard variational denoising approach and an unconstrained deep learning approach. We will use the BSDS500 dataset~\cite{arbelaez_contour_2011} for these experiments. This dataset consists of 500 natural colour images of size $321\times 481$, split into $N_\text{train}=200$ training images, $N_\text{val}=100$ validation images and $N_\text{test}=200$ test images. In our experiments, we adhere to the same splitting of the dataset. We normalise the images so that each channel only contains values in $[0, 1]$ and simulate noisy images $y$ corresponding to each ground truth image $x$ by adding Gaussian white noise $\eps$ with a standard deviation of $0.15$, corresponding to a high noise level (see~\Cref{fig:denoisers_kayak,fig:denoisers_bird}). For all of the learned approaches, we train the networks $\Gamma$ involved to solve the empirical risk minimisation problem
\begin{equation}
	\min_\Gamma \frac{1}{N_\text{train}}\sum\limits_{i=1}^N \| \Gamma(y_i) - x_i \|^2,
	\label{prob:learning_denoiser}
\end{equation}
and when we test the performance of the various denoisers on the test set, we essentially use the same loss: we report the peak signal-to-noise ratio (PSNR), defined between a reference $x^*$ and an estimate $\hat x$ as
\[\text{PSNR}(\hat x, x^*) = 10 \log_{10}\Big(\frac{\max_{i, j, k} |x^*_{i, j, k}|^2}{\frac{1}{3\cdot 321\cdot 481}\sum_{i, j, k} |x^*_{i, j, k} - \hat x_{i, j, k}|^2}\Big) .\]
The PSNR is a common metric used to measure image quality. It is a monotonically decreasing transformation of the mean squared error, normalised by the values of the reference image (so that it is invariant under the scaling $(\hat x, x^*)\mapsto (\lambda\cdot \hat x, \lambda\cdot x^*)$ for any $\lambda\in\R\setminus \{0\}$). As a result, higher PSNR values can be taken as an indication of better reconstructions. We use the training approach described in~\Cref{sec:training_methods} and consistently use minibatches of size 5.

The networks that we propose for this task are of the form
\begin{equation}
	\Gamma = A_\text{project}\circ\Xi \circ A_\text{lift},
	\label{eq:denoiser_definition}
\end{equation} where $A_\text{lift}$ is a lifting operation taking the $3$ input channels to $64$ channels by appending 61 channels filled with zeros, $A_\text{project}$ is a projection operator taking $64$ channels to the $3$ output channels by simply dropping the last 61 channels, and $\Xi$ is a network as in~\Cref{alg:euler_lipschitz} with each $A^i$ a convolution taking $64$ channels to $64$ channels and each $b^i\in \R^{64}$ (i.e.\ the biases are spatially constant, as usual). All convolution operators have kernel size $3\times 3$. This choice of lifting and projection operators ensures that $\alpha$-averagedness of $\Xi$ implies $\alpha$-averagedness of $\Gamma$: we have $A_\text{project} \circ A_\text{lift}=\id$ and $\|A_\text{lift}\| = \|A_\text{project}\|=1$ so if $\Xi = (1-\alpha) \id + \alpha T$ is $\alpha$-averaged with $T$ non-expansive, we also have
\begin{align*}
	\Gamma & =A_\text{project} \circ( (1- \alpha) \id + \alpha T ) \circ A_\text{lift}                           \\
	       & =(1-\alpha) A_\text{project}\circ A_\text{lift} + \alpha A_\text{project}\circ T\circ A_\text{lift} \\
	       & = (1-\alpha)\id + \alpha A_\text{project}\circ T\circ A_\text{lift}.
\end{align*}
Since $A_\text{project}$ and $A_\text{lift}$ are non-expansive, this shows that $\Gamma$ is $\alpha$-averaged. We will instantiate this architecture using the forward Euler integrator as in~\Cref{alg:euler_lipschitz} (with $\texttt{n\_blocks} = 10$) and will simply refer to this network by the name Euler in what follows.

Although the concrete architecture obtained when using the forward Euler integrator (as described in~\Cref{alg:euler_lipschitz}) is appealing in its simplicity, the framework laid out in~\Cref{sec:nonexpansive_odes} also allows us to use certain higher order integrators. For instance, consider Heun's method, which is given by \begin{equation}
	\label{eq:heun_method}
	\Phi^\text{Heun}_h(t, y, f) = y + \frac{h}{2} \Big(f(t, y) + f(t + h, y + hf(t, y))\Big).
\end{equation}
This is a $2$-stage, second-order, RK method, with
\[\mathcal A = \begin{pmatrix}0&0\\1&0\end{pmatrix}, \quad b = \begin{pmatrix}1/2\\1/2\end{pmatrix},\quad \diag(b)\mathcal A + \mathcal A^\top \diag(b) - bb^\top =\frac{1}{4}\begin{pmatrix} 1&-1\\-1&1\end{pmatrix},\]
and using~\Cref{remark:computing_r}, we conclude that Heun's method is $1$-circle contractive, like the forward Euler method is. As a result, $x\mapsto\Phi^\text{Heun}_h(0, x, f_{A, b})$ is non-expansive as long as $h \|A\|^2\leq 2$ and~\Cref{alg:euler_lipschitz} can be adapted to use Heun's method, the only change being that the steps $z^i\leftarrow z^{i -1} - h (A^i)^\top \sigma(A^i z^{i - 1} + b^i)$ are replaced by steps of the form $z^i \leftarrow \Phi^{\text{Heun}}_h(0, z^{i - 1}, f_{A^i, b^i})$. We will also instantiate the architecture in~\Cref{eq:denoiser_definition} with this integrator (again using $\texttt{n\_blocks} = 10$) and will refer to this network by the name Heun. As this integrator takes more evaluations of the right hand side of the ODE, it is more costly to compute the output of Heun than of Euler. Interested readers are referred to \ref{sec:timing_denoising} for comparisons of the time and memory requirements of these methods in the setting that we consider.

Similarly, we can consider integrators with yet higher orders, such as the fourth-order RK4 integrator $\Phi^{\mathrm{RK4}}_h$, given by~\Cref{def:runge_kutta} with
\begin{align*}
	 & \mathcal A = \begin{pmatrix} 0                                                            & 0 & 0  & 0  \\ 1/2 & 0 & 0 & 0 \\ 0 & 1/2 & 0 & 0\\ 0 & 0 & 1 & 0\end{pmatrix},\qquad\qquad\qquad\qquad b = \begin{pmatrix}1/6\\1/ 3\\1/3\\1/6\end{pmatrix}, \\
	 & \diag(b) \mathcal A + \mathcal A^\top\diag(b) - b b^\top = \frac{1}{36}\begin{pmatrix} -1 & 4 & -2 & -1 \\ 4 & -4 & 2 & -2\\-2 & 2 & -4& 4\\-1 & -2 & 4 & - 1\end{pmatrix}.
\end{align*}
Again using~\Cref{remark:computing_r}, we conclude that the RK4 method is $1$-circle contractive and we can replace the forward Euler method in~\Cref{alg:euler_lipschitz} by the RK4 method to obtain a non-expansive neural network. We design a denoiser (with $\texttt{n\_blocks}=10$) as in~\Cref{eq:denoiser_definition} with the RK4 integrator and refer to it by the name RK4. RK4 is again more costly to use since the RK4 integrator takes 4 evaluations of the right hand side of the ODE for each step.

All of our proposed networks have the same number, 369,280, of trainable parameters.

As a benchmark denoising algorithm using the variational approach, we can consider total variation (TV) denoising~\cite{rudin1992}, which gives the denoised image as
\begin{equation}
	\hat u = \argmin_{u} \frac{1}{2}\|u - y\|^2 + \alpha \| \nabla u \|_1,
	\label{eq:tv_denoising}
\end{equation}
where we have tuned $\alpha$ for optimal reconstruction performance on the training set. This approach solves a convex optimisation problem with a hand-crafted regularisation functional that favours reconstruction of piecewise constant images. Solving this problem accurately is more expensive than the deep learning approaches that we will consider and generally its hand-crafted prior information is not matched as well to reality as the prior information encoded in the learned approaches. On the other hand, this approach can be considered trustworthy for use in downstream tasks: \Cref{eq:tv_denoising} defines the reconstruction map as a proximal operator of a convex functional, so that the reconstruction map is $1/2$-averaged.

The DnCNN, introduced in~\cite{Zhang2017}, has become a standard benchmark for denoising tasks. A natural comparison to make is between our networks $\Gamma$ with $\texttt{n\_blocks} = 10$, and the DnCNN $\Gamma_\text{DnCNN} = \id - A_\text{project}\circ \Xi_\text{DnCNN} \circ A_\text{lift}$ where $\Xi_\text{DnCNN}$ is a 18-layer convolutional neural network without skip connections, the details of which are described in~\cite{Zhang2017}. $A_\text{lift}$ and $A_\text{project}$ are taken as convolutions that lift the channels to 64 channels from the 3 input channels and project the 64 channels back to the 3 output channels respectively. Note that this is a reasonable version of the DnCNN to compare with our networks since each block of our architecture contains a convolution and its transpose, whereas the DnCNN uses one convolution per layer. This DnCNN is trained to solve~\Cref{prob:learning_denoiser}, in the same way as our architectures except that no operator norm constraints are enforced on the convolutions. The DnCNN that we consider has 670,531 trainable parameters. The weight decay hyperparameters that we found to give the highest mean PSNR on the validation set were $\lambda =5\cdot 10^{-4}$ for the DnCNN and $\lambda = 10^{-5}$ for the non-expansive network using the Euler integrator. For the higher order integrators, we reused the weight decay hyperparameter that was optimal for the Euler integrator.

\begin{figure}[!htb]
	\centering
	\includegraphics[scale=1]{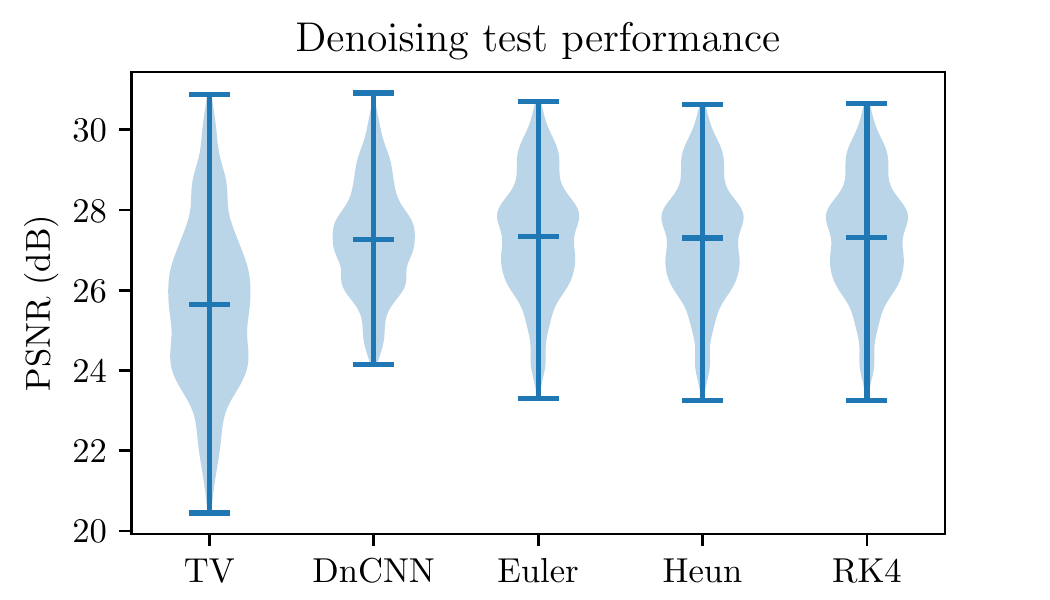}
	\caption{Comparison of the denoising performance of the various considered approaches on the test set. For each image in the test set, 10 instances of noisy images were generated, so that each denoiser is tested on a total of 2000 noisy images. The learned approaches all perform similarly and outperform the variational approach that uses TV regularisation.}
	\label{fig:denoisers_violinplot}
\end{figure}

The results of comparing all of these approaches are shown in~\Cref{fig:denoisers_violinplot}: all of the learned approaches perform basically on par with each other, slightly outperforming the variational approach. The gap between the learned approaches and the variational approach is not as large as one may expect, but this is a result of the high noise level considered. \Cref{fig:denoisers_kayak} and~\Cref{fig:denoisers_bird} show zoomed in examples of test images that are relatively more favourable to the learned approaches and to TV and vice versa. It is remarkable that Euler, Heun and RK4 perform extremely similarly to each other: although the higher order integrators are more costly to evaluate, and attain higher order of approximation of the underlying trajectories, this is not borne out in better denoising performance. This can be explained by the fact that (at least in this task) we are only interested in the endpoints of the underlying trajectories.

Another point that is worth expanding on is the fact that our proposed networks perform on par with the unconstrained DnCNN. This is perhaps somewhat unexpected: recall from the discussion in~\Cref{sec:nonexpansive_odes} that our approach bears some similarity to the Parseval proximal networks described in~\cite{hasannasab_parseval_2020,hertrichConvolutionalProximalNeural2021}. Their approach effectively uses an implicit Euler integrator and restricts the weights to be orthogonal to make the computation of the implicit step computationally feasible, and they observed that it was necessary to scale up their proposed networks by a multiplicative factor and apply residual learning (where the network models the noise instead of the image directly) to obtain networks that perform similarly to DnCNN. These modifications break the desirable non-expansiveness properties of the network, necessitating tricks such as using a so-called ``oracle denoiser'' to obtain the desired properties. On the other hand, our experiments in this section show that these tricks are not necessary: it is possible to enforce stability constraints while having high denoising performance.

\begin{figure}[!htb]
	\centering
	\includegraphics[scale=1]{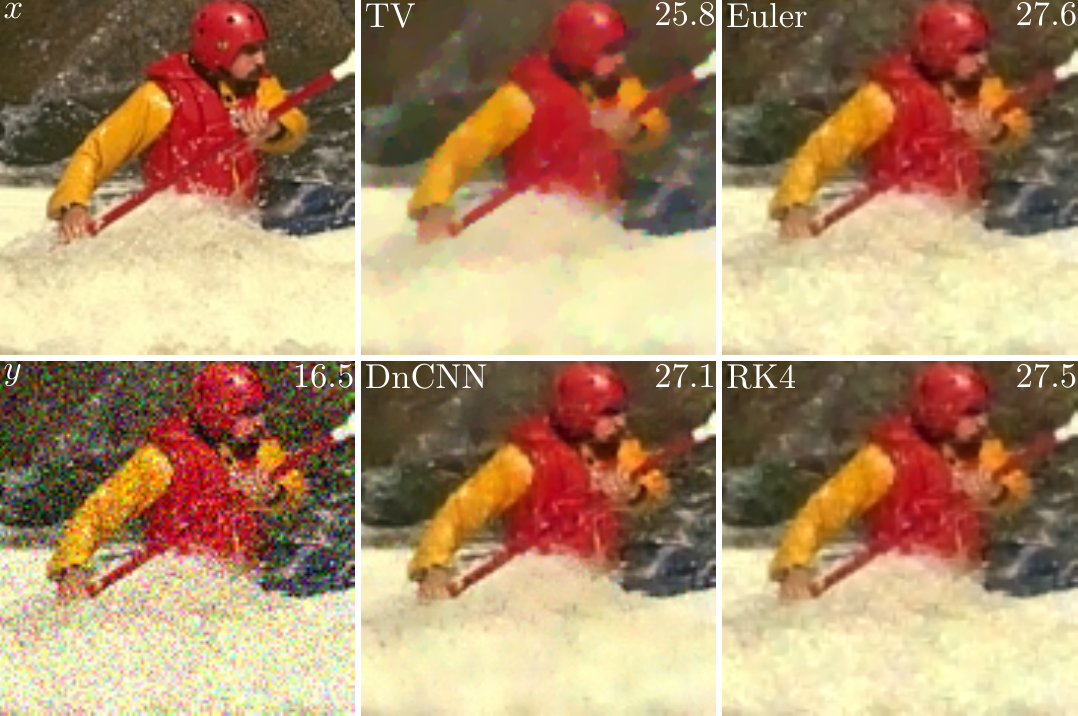}
	\caption{Comparison of some of the denoisers considered, on an image that is favourable to the learned approaches. Note that the learned approaches recover more fine details, whereas TV has a tendency to flatten them out. The numbers in the top right corner of each image are the PSNRs (in dB) relative to the ground truth $x$.}
	\label{fig:denoisers_kayak}
\end{figure}

\begin{figure}[!htb]
	\centering
	\includegraphics[scale=1]{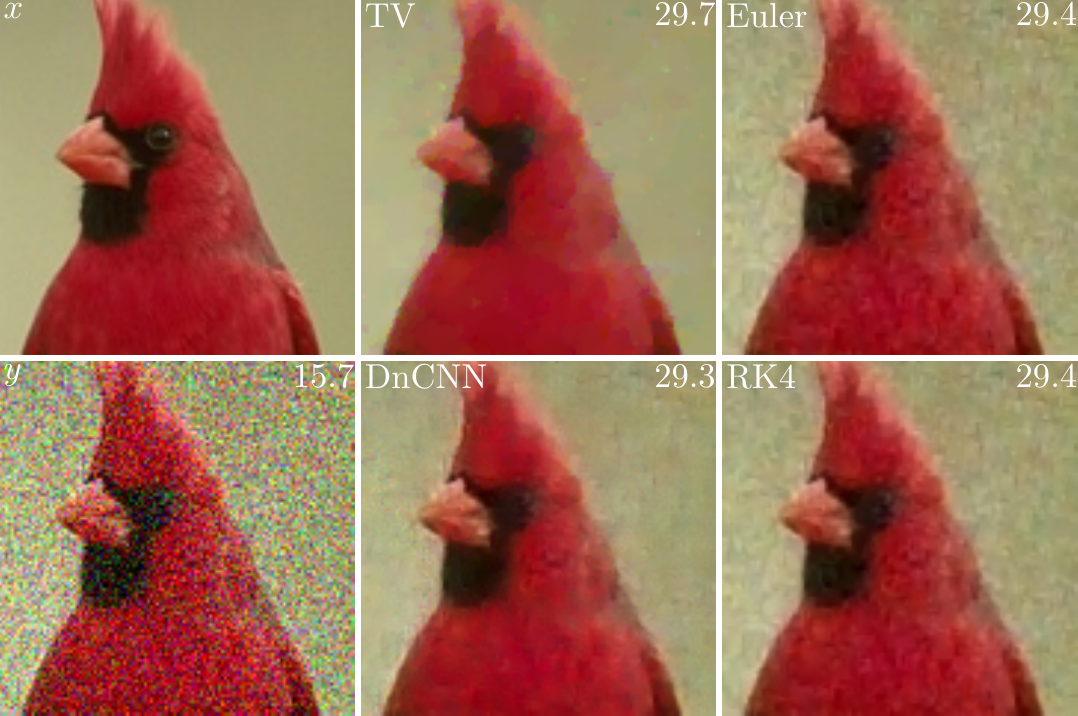}
	\caption{Comparison of some of the denoisers considered, on an image that is favourable to TV. The true image to be recovered is relatively well-approximated by a piecewise constant image. Note that the finer details in the image (for example the ridge between the top and bottom part of the beak) are still better recovered by the learned approaches. The numbers in the top right corner of each image are the PSNRs (in dB) relative to the ground truth $x$.}
	\label{fig:denoisers_bird}
\end{figure}
\subsection{Plug-and-Play applications of the learned denoisers}
\label{sec:pnp}
Recall from~\cref{sec:introduction} that one of the downstream applications where provably stable neural networks are of great interest is in the Plug-and-Play (PnP) approach to solving ill-posed inverse problems using learned denoisers. With the results from the previous section and the theoretical results of~\cref{sec:euler_averaged}, we are now in a position to pursue this further. Indeed, recalling~\cref{thm:averaged_layer} and~\cref{thm:averaged_composition}, we get that the denoiser trained using the Euler integrator in the previous section is an averaged operator, since the adaptive training approach ensures that the condition in~\cref{thm:averaged_layer} is satisfied (for an unspecified $\alpha > 0$) at all times. In particular, it has convergent fixed-point iterations~\cite[Theorem 5.14]{bauschke_convex_2011}. Let us see what this means in practice: \cref{fig:repeated_denoisers} shows what happens if we repeatedly apply the learned DnCNN and Euler denoisers to an input image. We clip the values of all images to lie in $[0, 1]$ for display purposes, but it is evident from the figure that repeatedly applying DnCNN results in divergence, whereas repeatedly applying Euler converges. This convergence property is a desirable property for a denoiser, and is for instance satisfied by any proximal operator of a convex regularisation functional.

\begin{figure}[!htb]
	\centering
	\includegraphics[scale=1]{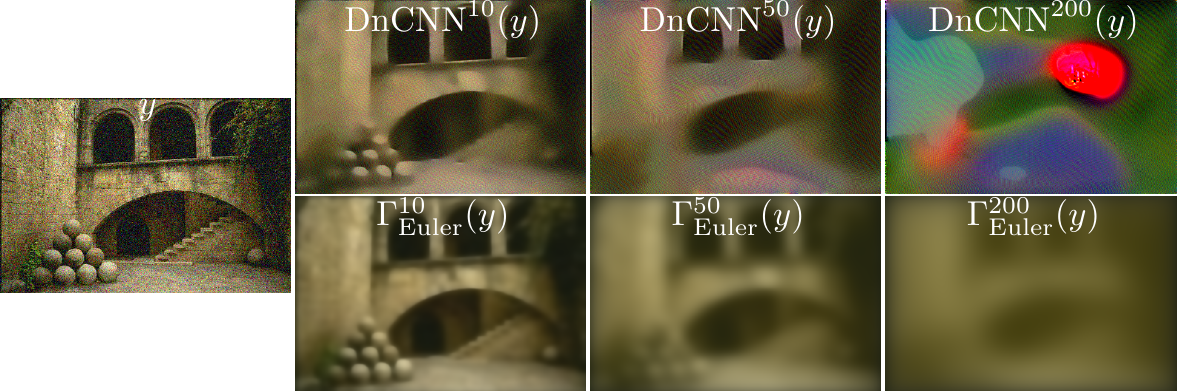}
	\caption{Repeated application of the unconstrained and constrained denoisers to a given input image gives drastically different results: for the unconstrained DnCNN, this sequence diverges, whereas for the averaged Euler denoiser, this sequence converges.}
	\label{fig:repeated_denoisers}
\end{figure}

Let us consider the inverse problem of deblurring: we assume that we are given measurements $y = K x + \eps$, where $K x = k* x$ is a convolution operation representing a motion blur, $x$ is the ground truth image that we aim to recover as well as possible and $\eps$ is Gaussian white noise corrupting the measurements. The ill-posedness of this problem is manifested in the instability of the inverse of the convolution; as a consequence of this, a naive inversion of the measurements will blow up the noise in the measurements. The PnP approach uses the previously learned denoiser to regularise this inversion. In particular, we will use the simple PnP proximal gradient method of~\cref{alg:pnp_fb}, with $E_y(x) = \| y - Kx \|^2/2$ and the denoiser $\Phi$ equal to the previously learned Euler denoiser. We choose the step size $\tau = 1 / \| K\|^2$, which together with the fact that the denoiser is averaged gives a convergence guarantee~\cite[Proposition 2]{sunOnlinePlugandPlayAlgorithm2019}. The asserted convergence is practically observed in~\cref{fig:pnp_figures}: the result $\hat x$ obtained as the limit of the PnP iterations trades off the prior information about natural images encoded in the Euler denoiser with the data consistency.

\begin{figure}[!htb]
	\centering
	\includegraphics[scale=1]{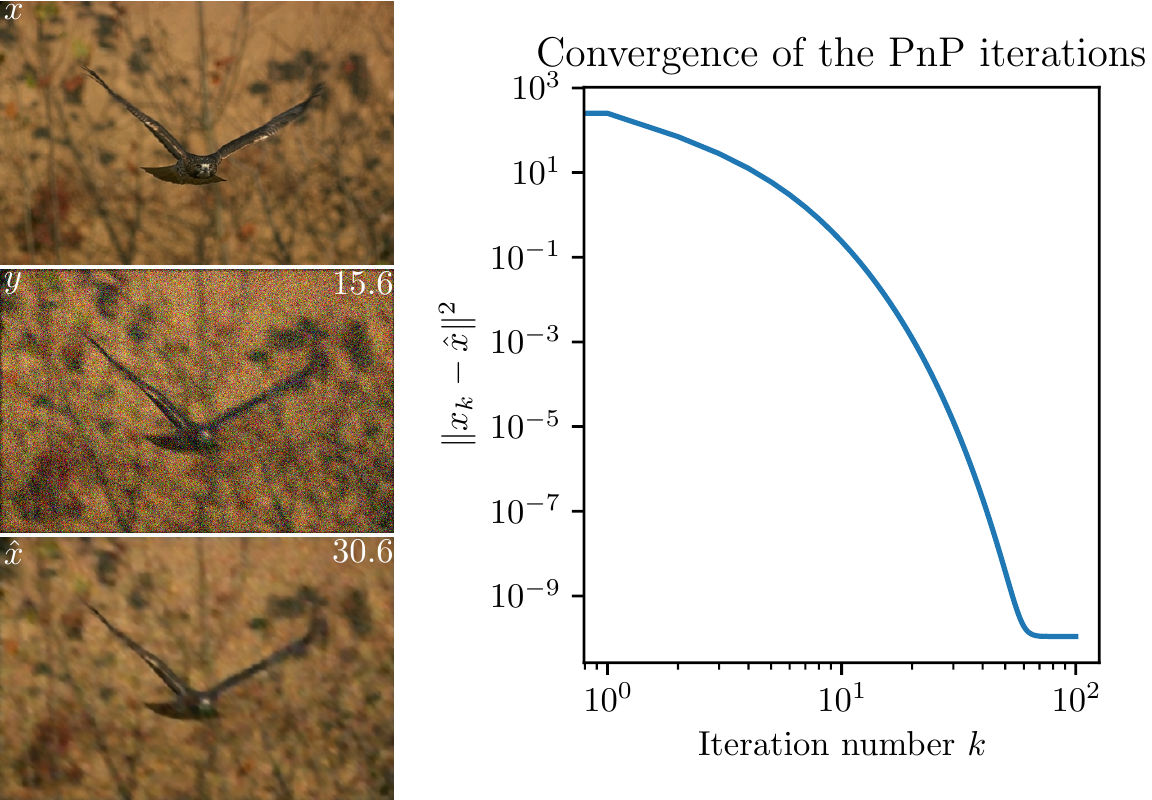}
	\caption{Using the learned Euler denoiser to solve an ill-posed inverse problem (deblurring) in a PnP fashion, with convergence guarantee. The numbers in the top right corner of each image are the PSNRs (in dB) relative to the ground truth $x$.}
	\label{fig:pnp_figures}
\end{figure}


\section{Conclusions and discussion}
We have exhibited a family of ResNet architectures for which it is straightforward to enforce non-expansiveness. The proposed architecture is given by compositions of numerical integration steps along gradient flows in convex potentials. For the main example using the forward Euler method, we have used tools from convex analysis to show that the architecture can be used to encode averaged operators. We have demonstrated the use of the proposed architectures on adversarially robust image classification and on image denoising, with the idea of applying the learned denoisers to ill-posed inverse problems.

With a novel adaptive training approach, we have shown that it is possible to obtain performant neural networks, even when enforcing desirable stability constraints. Although the basic architecture uses the first order forward Euler method as the numerical integrator, it is possible to use higher order methods. In the tasks considered in this work, using higher order integrators did not come with any benefits: the performance of the learned networks was not improved, while the computational cost was significantly increased. It remains a question of interest whether there are tasks where the higher order integrators provide real benefits over the forward Euler integrator.

Future work could also go towards studying the application of these architectures in typical deep learning applications such as GANs, specifically in Wasserstein GANs which require the use of a $1$-Lipschitz critic function. We have seen in practice that the proposed architectures are quite expressive, but an interesting direction for future work would also be to study ways in which more general learnable non-expansive flows can be used to motivate the design of provably stable neural network architectures, and provide approximation guarantees for them. This is of particular interest in the setting of adversarial robustness. While we did not see a meaningful difference in performance between our proposed networks and the unconstrained, high-performance, denoiser in \cref{sec:denoising}, we saw a gap in clean accuracy between the proposed NonExpNet and an unconstrained ResNet in the image classification task in \cref{sec:adv_robustness}. More to the point, we saw that the unconstrained ResNet trained using adversarial training can attain similar robustness to the NonExpNet, while still being highly accurate on clean images. In the future, we aim to improve NonExpNet to narrow the gap in clean accuracy, with the aim of retaining robustness without requiring adversarial training. Finally, the framework shown in this paper essentially depends on the use of the Euclidean norm, but depending on the application one may want to design neural networks based on the dynamical systems connection that are 1-Lipschitz with respect to different norms.


\section*{Acknowledgments}
EC \& BO have received support from the European Union Horizon 2020 research
and innovation programme under the Marie Sk\l odowska-Curie grant agreement No.\ 860124. MJE acknowledges support from EPSRC (EP/\allowbreak S026045/\allowbreak 1, EP/T026693/1, EP/V026259/1) and the Leverhulme Trust (ECF-2019-478). DM was partially supported by a grant from the Simons Foundation. CBS acknowledges support from the Philip Leverhulme Prize, the Royal Society Wolfson Fellowship, the EPSRC advanced career fellowship EP/V029428/1, EPSRC grants EP/S026045/1 and EP/T003553/1, EP/N014588/1, EP/T017961/1, the Wellcome Innovator Awards 215733/Z/19/Z and 221633/Z/20/Z, the European Union Horizon 2020 research and innovation programme under the Marie Sk\l odowska-Curie grant agreement No.\ 777826 NoMADS, the Cantab Capital Institute for the Mathematics of Information and the Alan Turing Institute. FS acknowledges support from the EPSRC advanced career fellowship EP/V029428/1.

\bibliographystyle{plain}
\bibliography{references}

\newpage
\appendix

\section{Comparisons of computational costs}
\label{sec:computational_cost}
In this section, we will give estimates of relevant timings and memory measurements for the methods that we compared in~\Cref{sec:experiments}. In each instance, we perform these measurements on a single minibatch, with the same settings as were used for training, so that the timings give a reasonable view of the cost of training.

To perform the timing and memory measurements, we use functionality provided by PyTorch for NVIDIA GPUs. Replacing the comments in Code Snippet~\ref{snippet:GPU_timing_memory} by appropriate code to set up the model and the code to be measured, we get code that can be used for these measurements.

\begin{code}
    \captionof{listing}{GPU timing and memory measurement code}
    \begin{minted}[frame=lines]{python}
import torch

""" code to set up model on device """

start = torch.cuda.Event(enable_timing=True)
end = torch.cuda.Event(enable_timing=True)
torch.cuda.reset_max_memory_allocated(device)

start.record()
""" run code involving model """
end.record()
torch.cuda.synchronize(device)

time = start.elapsed_time(end)
memory = torch.cuda.max_memory_allocated(device)
    \end{minted}
    \label{snippet:GPU_timing_memory}
\end{code}

In the tables that follow, $t_\text{FB}$ is the time taken to perform a forward propagation and backpropagation, $t_\text{P}$ is the time taken to perform an iteration of the power method, $t_\text{F}$ is the time taken to perform a forward propagation and we refer to the GPU memory allocated by mem.

\subsection{Adversarial robustness examples}
\label{sec:timing_adversarial}
As in the examples studied in \Cref{sec:adv_robustness}, we perform measurements on a single minibatch consisting of 512 images from CIFAR-10. The proposed network initially performs 15 integration steps (corresponding to NonExpNet\textsubscript{init} in \Cref{tab:adv_timing}). During training with the adaptive approach, the maximum number of integration steps taken by the proposed network is 63 (corresponding to NonExpNet\textsubscript{max} in \Cref{tab:adv_timing}).
\begin{table}[!ht]
    \centering
    \caption{Timings and memory measurements for the models compared in \Cref{sec:adv_robustness}}
    \begin{tabular}{c|ccc|cc}
        \toprule
        \multirow{2}{*}{Method}       & \multicolumn{3}{|c}{Training} & \multicolumn{2}{|c}{Inference}                    \\
                                      &
        $t_\text{FB}$ (ms)            & $t_\text{P}$ (ms)             & mem (MB)                       &
        $t_\text{F}$ (ms)             & mem (MB)
        \\
        \midrule
        ResNet                        & 20                            & N/A                            & 1,758 & 6  & 896 \\
        ResNet-AT                     & 163                           & N/A                            & 1,782 & 6  & 896 \\\midrule
        NonExpNet\textsubscript{init} & 20                            & 5                              & 1,758 & 6  & 896 \\
        NonExpNet\textsubscript{max}  & 83                            & 5                              & 4,782 & 26 & 931 \\
        \bottomrule
    \end{tabular}
    \label{tab:adv_timing}
\end{table}

\subsection{Denoising examples}
\label{sec:timing_denoising}
As in the examples studied in \Cref{sec:denoising}, we perform measurements on a single minibatch consisting of 5 images from BSDS500. All of the proposed networks initially perform 10 integration steps (corresponding to Euler\textsubscript{init}, Heun\textsubscript{init} and RK4\textsubscript{init} in \Cref{tab:denoising_timing}). During training with the adaptive approach, the maximum number of integration steps taken by the proposed networks are 23, 22 and 26 steps for the networks using the Euler, Heun and RK4 integrator, respectively (shown in \Cref{tab:denoising_timing} as Euler\textsubscript{max}, Heun\textsubscript{max}, RK4\textsubscript{max}).

\begin{table}[!ht]
    \centering
    \caption{Timings and memory measurements for the models compared in \Cref{sec:denoising}}
    \begin{tabular}{c|ccc|cc}
        \toprule
        \multirow{2}{*}{Method}   & \multicolumn{3}{|c}{Training} & \multicolumn{2}{|c}{Inference}                          \\
                                  &
        $t_\text{FB}$ (ms)        & $t_\text{P}$ (ms)             & mem (MB)                       &
        $t_\text{F}$ (ms)         & mem (MB)
        \\
        \midrule
        TV                        & N/A                           & N/A                            & N/A    & 1,447 & 1,086 \\
        DnCNN                     & 143                           & N/A                            & 8,563  & 45    & 1,519 \\\midrule
        Euler\textsubscript{init} & 102                           & 8                              & 8,086  & 37    & 2,293 \\
        Euler\textsubscript{max}  & 236                           & 8                              & 15,612 & 84    & 2,485 \\\midrule
        Heun\textsubscript{init}  & 212                           & 8                              & 14,068 & 77    & 2,679 \\
        Heun\textsubscript{max}   & 466                           & 8                              & 27,964 & 168   & 2,870 \\\midrule
        RK4\textsubscript{init}   & 443                           & 8                              & 26,034 & 163   & 3,065 \\
        RK4\textsubscript{max}    & 1,155                         & 8                              & 63,090 & 423   & 3,450 \\
        \bottomrule
    \end{tabular}
    \label{tab:denoising_timing}
\end{table}

\end{document}